\documentclass{article}

\usepackage{microtype}
\usepackage{graphicx}
\usepackage{booktabs} 

\usepackage{hyperref}

\usepackage[accepted]{icml2026}

\usepackage{amsmath}
\usepackage{amssymb}
\usepackage{mathtools}
\usepackage{amsthm}

\usepackage[capitalize,noabbrev]{cleveref}

\theoremstyle{plain}
\newtheorem{theorem}{Theorem}[section]

\newtheorem{lemma}[theorem]{Lemma}

\theoremstyle{definition}
\newtheorem{definition}[theorem]{Definition}

\theoremstyle{remark}

\usepackage{url}
\usepackage{subfig}
\usepackage{multirow}
\usepackage{xcolor}
\usepackage[table]{xcolor}
\usepackage{todonotes}
\usepackage{bm} 
\definecolor{blackishgreen}{rgb}{0.1, 0.8, 0.1}
\definecolor{blackishred}{rgb}{0.8, 0.1, 0.1}

\usepackage{enumitem} 

\usepackage[utf8]{inputenc} 
\usepackage[T1]{fontenc}    
\usepackage{url}            
\usepackage{booktabs}       
\usepackage{amsfonts}       
\usepackage{nicefrac}       
\usepackage{microtype}      
\usepackage{xcolor}         

\usepackage{wrapfig}
\usepackage{cleveref}
\crefname{table}{Table}{Tables}
\crefname{appendix}{Appendix}{Appendixes}
\crefname{algorithm}{Algorithm}{Algorithms}
\crefname{figure}{Figure}{Figures}
\crefname{theorem}{Theorem}{}
\crefname{lemma}{Lemma}{}
\crefname{section}{Section}{Sections}
\renewcommand{\eqref}[1]{Eq.~(\textup{\ref{#1}})}

\usepackage{thm-restate}

\crefname{section}{Section}{Sections}
\Crefname{section}{Appendix}{Appendices}

\icmltitlerunning{Semi-Supervised Noise Adaptation: Transferring Knowledge from Noise Domain}

\begin{document}

\twocolumn[
  \icmltitle{Semi-Supervised Noise Adaptation: Transferring Knowledge from Noise Domain}



  \icmlsetsymbol{equal}{*}

  \begin{icmlauthorlist}
    \icmlauthor{Yuan Yao}{yy,yy1}
    \icmlauthor{Jin Song}{sj}
    \icmlauthor{Huixia Li}{lhx}
    \icmlauthor{Tongtong Yuan}{ytt}
    \icmlauthor{Jiaqi Wu}{wjq}
    \icmlauthor{Yu Zhang}{zy}
  \end{icmlauthorlist}

  \icmlaffiliation{yy}{Teleinfo, CAICT}
  \icmlaffiliation{yy1}{Guangdong Laboratory of Artificial Intelligence and Digital Economy (SZ)}
  \icmlaffiliation{sj}{Nanjing University of Posts and Telecommunications}
  \icmlaffiliation{lhx}{Beijing Jiaotong University}
  \icmlaffiliation{ytt}{Beijing University of Technology}
  \icmlaffiliation{wjq}{Tsinghua University}
  \icmlaffiliation{zy}{Southern University of Science and Technology}

  \icmlcorrespondingauthor{Huixia Li}{hxli@bjtu.edu.cn}
  \icmlcorrespondingauthor{Yu Zhang}{yu.zhang.ust@gmail.com}

  \icmlkeywords{Machine Learning, ICML}

  \vskip 0.3in
]



\printAffiliationsAndNotice{}  

\begin{abstract}

Transfer learning aims to facilitate the learning of a target domain by transferring knowledge from a source domain. The source domain typically contains semantically meaningful samples (\textit{e.g.}, images) to facilitate effective knowledge transfer. However, a recent study observes that the noise domain constructed from simple distributions (\textit{e.g.}, Gaussian distributions) can serve as a surrogate source domain in the semi-supervised setting, where only a small proportion of target samples are labeled while most remain unlabeled. Based on this surprising observation, we formulate a novel problem termed \textit{Semi-Supervised Noise Adaptation} (SSNA), which aims to leverage a synthetic noise domain to improve the generalization of the target domain. To address this problem, we first establish a generalization bound characterizing the effect of the noise domain on generalization, based on which we propose a Noise Adaptation Framework (NAF). Extensive experiments demonstrate that NAF effectively leverages the noise domain to tighten the generalization bound of the target domain, leading to improved performance. 
The codes are available at \url{https://github.com/AIResearch-Group/SSNA}.

\end{abstract}

\begin{figure}[t]
\centering
\centering
{\includegraphics[width=1\columnwidth]{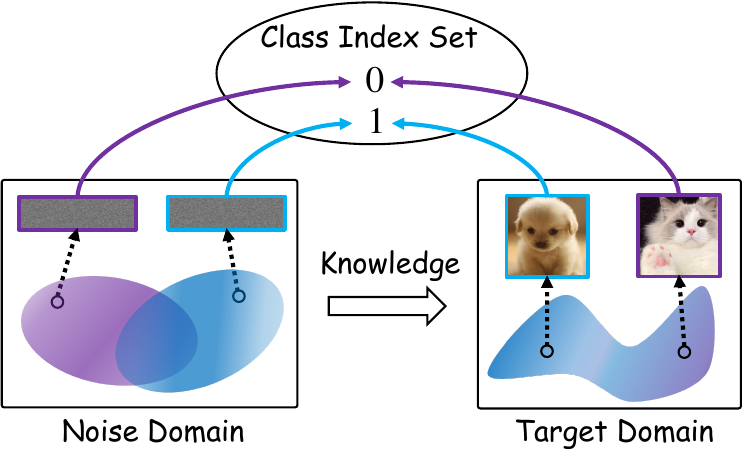}}
\caption{Semi-Supervised Noise Adaptation (SSNA): The target domain includes a limited number of labeled samples, with most remaining unlabeled, while the noise domain is generated from random distributions. Noise classes, lacking semantic meaning, are mapped one-to-one to target classes. The goal is to improve the generalization of the target domain by utilizing the noise domain.}
\label{fig:Introduction}
\end{figure}%

\section{Introduction}
\label{sec:intro}

Transfer Learning (TL) \citep{Pan2010A,Yang2020Transfer} aims to transfer knowledge from a label-rich source domain to a related but label-scarce target domain. Most TL approaches have been proposed \citep{Pan2010A,day2017survey,jiang2022transferability,Yang2020Transfer,bao2023recent}, demonstrating notable progress in various practical applications \citep{gu2022keypoint,Yao2019Heterogeneous,meegahapola2024m3bat,ren2024towards}.
While the source and target domains often exhibit distributional divergence, the source domain typically contains semantically meaningful samples (\textit{e.g.}, images, text, or audio) that provide a crucial foundation for effective knowledge transfer.
However, a recent study \citep{yao2025noise} has made a surprising finding: \textit{Noise drawn from simple distributions (\textit{e.g.}, Gaussian distributions), can also serve as a viable source domain, provided that its discriminability and transferability are preserved}.
Although noise is generally viewed as semantically meaningless and even detrimental, empirical evidence has demonstrated that knowledge can be transferred from the noise domain to the target domain in the Semi-Supervised Learning (SSL) setting \citep{gui2024survey}, where most target samples are unlabeled and only a small subset is labeled.
This observation is particularly valuable, as concerns related to privacy, confidentiality, and copyright often hinder the acquisition of feasible source samples.
However, this study has two key limitations: (i) it lacks a generalization bound analysis explaining why the noise domain improves generalization; and (ii) its experiments omit standard benchmarks such as CIFAR-10/100 \citep{krizhevsky2009learning} and ImageNet-1K \citep{deng2009imagenet}, which may limit the applicability of its findings.

Motivated by those limitations, we formalize a novel problem termed \textit{Semi-Supervised Noise Adaptation} (SSNA), as illustrated in \cref{fig:Introduction}. Under the SSNA setting, we define a \textit{target} domain and a \textit{noise} domain. The target domain comprises a small proportion of labeled samples, with most remaining unlabeled. In contrast, the noise domain is generated from random distributions and serves as a surrogate source domain.
\textit{Since noise inherently lacks semantic meanings, we follow \citep{yao2025noise} and randomly and uniquely assign the class indices from the target domain to each noise class in a one-to-one manner} 
(see solid arrow in \cref{fig:Introduction}). Accordingly, the learning tasks in both domains are aligned.
The objective of SSNA is to enhance the generalization of the target domain by leveraging both labeled and unlabeled target samples, as well as noise.

\begin{figure}[t]
\centering
{\includegraphics[width=1\columnwidth]{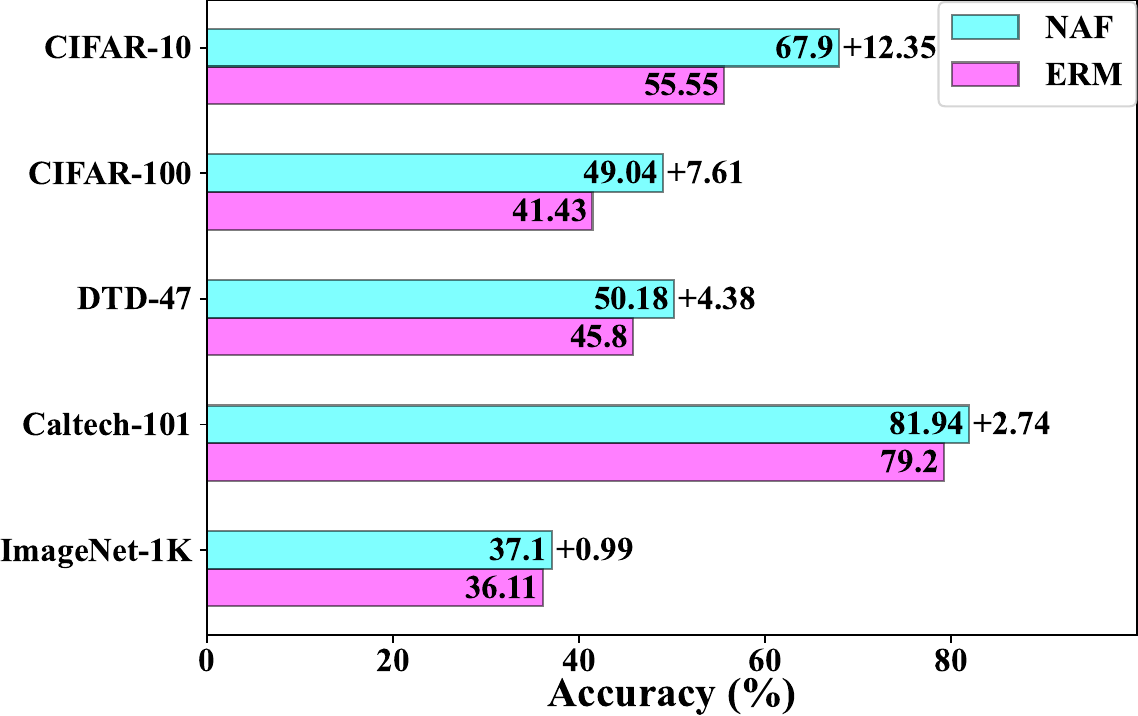}}
\caption{Accuracy (\%) of NAF and ERM on five benchmark datasets, \textit{i.e.}, CIFAR-10, CIFAR-100, DTD-47, Caltech-101, and ImageNet-1K, using ResNet-18. NAF outperforms ERM across all the datasets, demonstrating the effectiveness of NAF in transferring knowledge from the noise domain to the target domain.}
\label{fig:Introduction_Results}
\vspace{-2ex}
\end{figure}

To address this problem, we first establish a generalization bound that characterizes how the noise domain affects generalization in the target domain.
Based on this theoretical insight, we propose a Noise Adaptation Framework (NAF) that projects target samples and noise into a domain-shared representation space by minimizing the empirical risks of both domains and reducing their distributional divergence. Optimizing NAF’s objective effectively tightens the target domain’s generalization bound, thereby improving its generalization performance.
Experimental results on several benchmarks demonstrate the effectiveness of NAF over Empirical Risk Minimization (ERM), a standard supervised learning baseline. As shown in \cref{fig:Introduction_Results}, NAF consistently outperforms ERM by up to 12.35\%, 7.61\%, 4.38\%, and 2.74\% on CIFAR-10, CIFAR-100, DTD-47, and Caltech-101, respectively, with 4 labeled samples per class.
Moreover, on the more challenging ImageNet-1K dataset with 1000 classes and 100 labeled samples per class, NAF achieves an improvement of up to 0.99\% over ERM.

The main contributions of this paper are threefold. 
\begin{itemize}[itemsep=0em, topsep=0em]
    \item We introduce the SSNA problem, providing a fresh perspective on the utilization of noise.
    \item We provide a generalization bound of SSNA that characterizes the impact of the noise domain on generalization, based on which we propose the NAF.
    \item Extensive experiments show that NAF effectively tightens the generalization bound of the target domain, leading to improved generalization performance.
\end{itemize}

\section{Related Work}
\label{sec:relatedWork}

Our work is closely related to TL \citep{Pan2010A,Yang2020Transfer} and SSL \citep{van2020survey,gui2024survey}. Both TL and SSL are designed to enhance the generalization of the target domain by facilitating effective learning from \textit{unlabeled target samples}. TL achieves this by leveraging abundant labeled source samples, whereas SSL utilizes a few labeled target samples. Further details are provided in \Cref{appendix:relatedWork}.

Recently, \citet{yao2025noise} further observe that, in the semi-supervised TL setting, noise drawn from simple distributions (\textit{e.g.}, Gaussian distributions) may contain transferable knowledge when its discriminability and transferability are preserved.
Although noise is typically viewed as semantically meaningless, several studies have demonstrated its usefulness in diverse learning scenarios \citep{Baradad2021Learning,li2024positive,huang2025enhance,wang2025towards,Tang2022Virtual,Luo2021No}. For example, \citet{Baradad2021Learning} leverage noise for contrastive pre-training and achieve improved downstream performance, while other works exploit \textit{positive-incentive noise} to enhance generalization via data or representation augmentation \citep{li2024positive,huang2025enhance,wang2025towards}. Noise has also been used to mitigate distribution heterogeneity in federated learning \citep{Luo2021No,Tang2022Virtual}.

Overall, complementing existing studies, \textit{our work provides a new perspective on semi-supervised TL by leveraging domain-agnostic noise domains to facilitate learning from unlabeled target samples}, supported by theoretical analysis and empirical evaluation on standard benchmarks.

\begin{figure*}[t]
\centering
{\includegraphics[width=0.98\textwidth]{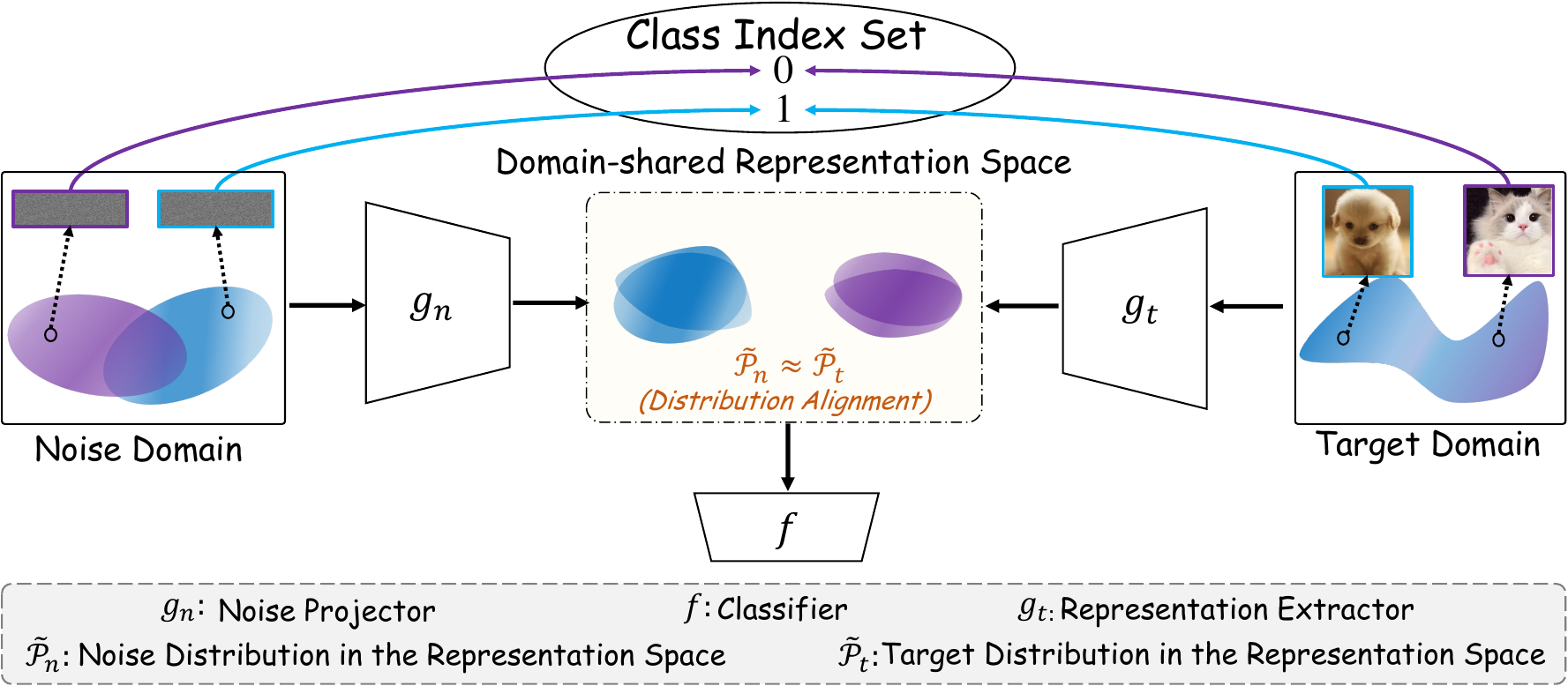}}
\caption{Under the SSNA setting, a randomly generated noise domain and a target domain share the same class index set. In NAF, noise and target samples are projected into a domain-shared representation space via a noise projector $g_n (\cdot)$ and a representation extractor $g_t (\cdot)$, respectively. By classifying noise according to the class indices in this representation space using a classifier $f (\cdot)$, the noise domain can induce a discriminative structure, which may facilitate alignment with the target domain and improve target representation separability.}
\label{fig:bound}
\end{figure*}

\section{Problem Formulation}

In this section, we formulate the SSNA problem.
Let $\mathcal{C} = \{ 0, \dots, C-1 \}$ be the class index set, where $C$ denotes the total number of classes. Let $\mathcal{E}$ and $\mathcal{X}$ denote the noise space (\textit{e.g.}, a $p$-dimensional space) and the sample space (\textit{e.g.}, a pixel-level image space), respectively.

\begin{definition} [Target Domain]
The target domain is defined as $\mathcal{D}_t = \mathcal{D}_l \cup \mathcal{D}_u \cup \mathcal{D}_e$, where all samples lie in the sample space $\mathcal{X}$.  
Specifically, $\mathcal{D}_l = \{ (\mathbf{x}_i^l, y_i^l) \}_{i=1}^{n_l}$ consists of labeled target samples, where each sample $\mathbf{x}_i^l$ is associated with a semantic class (\textit{e.g.}, ``dog'') that is mapped to an integer label $y_i^l \in \mathcal{C}$.  
$\mathcal{D}_u = \{ \mathbf{x}_i^u \}_{i=1}^{n_u}$ and $\mathcal{D}_e = \{ \mathbf{x}_i^e \}_{i=1}^{n_e}$ include the unlabeled and test target samples, respectively.  
Furthermore, the number of labeled target samples is much smaller than that of the unlabeled target samples, \textit{i.e.}, $n_l \ll n_u$.
\end{definition}

\begin{definition}[Noise Domain]
The noise domain is defined as $\mathcal{D}_n = \{ (\mathbf{n}_i, y_i) \}_{i=1}^{n}$, where each noise $\mathbf{n}_i$ is drawn from a random distribution over $\mathcal{E}$. The corresponding label $y_i \in \mathcal{C}$ serves purely as an integer identifier without any semantic information.
\end{definition}

\begin{definition}[SSNA]
Given a target domain $\mathcal{D}_t$, the objective of SSNA is to train a high-quality model $h_{\bm{\theta^*}}$ using samples from $\mathcal{D}_l$, $\mathcal{D}_u$, and noise from $\mathcal{D}_n$, and then apply $h_{\bm{\theta^*}}$ to classify the samples in $\mathcal{D}_e$ for evaluation.
\end{definition}

\section{Methodology}
\label{sec:Bound}

In this section, we establish a generalization bound for SSNA and develop NAF accordingly. We then empirically show that the noise domain can help tighten the bound, and finally discuss its theoretical implications and differences from standard data augmentation.

\subsection{A Generalization Bound for SSNA}
\label{sec:boundAnalysis}

Before presenting the generalization bound for SSNA, we first address two fundamental questions based on the findings in \citep{yao2025noise}: 
\begin{itemize}[label={}, itemsep=0em, topsep=0em]
    \item \textbf{(i)} \textit{What knowledge is contained in the noise domain that can benefit the target domain?}
    \item \textbf{(ii)} \textit{Is the semi-supervised setting in the target domain necessary?}
\end{itemize}

Regarding question \textbf{(i)}, although the noise domain is constructed by random sampling from a noise space, it shares the same class index set with the target domain (see \cref{fig:bound}), thereby aligning their learning tasks. Concretely, the target domain contains $C$ classes indexed by $\{0,\dots,C-1\}$. Accordingly, we set the number of noise classes to $C$ and sample noise for each class from a distinct Gaussian distribution. All noise drawn from each Gaussian distribution is assigned a distinct class index in $\{0,\dots,C-1\}$ prior to training, establishing a fixed one-to-one correspondence between noise and target classes.
Classifying noise into distinct class indices induces a \textit{\textbf{discriminative structure}} in the representation space, \textit{i.e.}, \textit{\textbf{noise with the same class index forms compact clusters, whereas those with different class indices are separated}}. Although the noise domain itself lacks semantic meaning, this induced structure provides valuable knowledge for transfer. For example, in \cref{fig:bound}, class “0” in the noise domain carries no semantics, yet it corresponds to class “cat” in the target domain. During distribution alignment, noise representations from class “0” are aligned with target representations of class “cat”, enforcing structural alignment across domains. Thus, the discriminative structure of the noise domain serves as guidance, facilitating clearer class separation in the target domain.

As for question \textbf{(ii)}, without labeled target samples to align the class indices between the noise and target domains, a classifier trained solely on the noise domain cannot effectively classify target samples. This is because the noise is randomly generated and does not originate from the same sample space as the target domain, lacking any inherent relationship with the target samples. Consequently, \textbf{\textit{a few labeled target samples are needed to bridge the two domains by aligning their class indices, enabling the effective transfer of discriminative structure from the noise domain to the target domain}} (see \textbf{Q6} in \cref{sec:Empirical_ana} for details).

Next, we apply the theoretical framework of semi-supervised TL in \citep{ben2010theory} to analyze the generalization bound of SSNA. Since the noise does not originate from the same sample space as the target domain, it is infeasible to directly measure the distributional discrepancy between them. To address this issue, we project both domains into a domain-shared representation space $\mathcal{Z}$ and derive the generalization bound for the target domain within $\mathcal{Z}$. Specifically, let $\mathcal{F}$ be a hypothesis space over $\mathcal{Z}$, consisting of functions $f: \mathcal{Z} \rightarrow \{0, 1\}$ with VC dimension $d$. Denote by $\widetilde{\mathcal{P}}_t$ and $\widetilde{\mathcal{P}}_n$ the target and noise distributions over $\mathcal{Z}$, respectively. Let $\mathbb{U}_t$, $\mathbb{U}_n$ be unlabeled samples of size $m'$ each, drawn \textit{i.i.d.} from $\widetilde{\mathcal{P}}_t$ and $\widetilde{\mathcal{P}}_n$, respectively. Let $\mathbb{L}_t$ and $\mathbb{L}_n$ be labeled samples of sizes $\beta m$ and $(1-\beta)m$, drawn \textit{i.i.d.} from $\widetilde{\mathcal{P}}_t$ and $\widetilde{\mathcal{P}}_n$, respectively. Define $\hat{\epsilon}_\alpha (f) = \alpha \hat{\epsilon}_t (f) + (1 - \alpha) \hat{\epsilon}_n (f)$ $(\alpha \in [0, 1])$ as the convex combination of the empirical target error $\hat{\epsilon}_t (f)$ and empirical noise error $\hat{\epsilon}_n (f)$, measured on $\mathbb{L}_t$ and $\mathbb{L}_n$, respectively. Based on those notations summarized in \cref{appendix_tab:notation} of \Cref{appendix:notation}, we present the generalization bound of SSNA in a two-domain setting in \cref{theo:SSNA}.
\begin{theorem}[Generalization Bound of SSNA] 
\label{theo:SSNA}
Let $\hat{f} = \arg\min_{f \in \mathcal{F}} \hat{\epsilon}_{\alpha} (f)$ be the empirical minimizer of $\hat{\epsilon}_\alpha (f)$, and let $f_t^* = \arg\min_{f \in \mathcal{F}} \epsilon_t (f)$ be the target error minimizer. Then, for any $\delta \in (0, 1)$, with probability at least $1 - \delta$ (over the choice of the samples), we have:
\begin{equation}
\begin{split}
\small
\epsilon_t (\hat{f}) 
&\leq 
\epsilon_t (f_t^*) + \mathcal{O} \left( 
\gamma
\sqrt{\frac{d \log m + \log(\frac{1}{\delta})}{m}} \right) 
\nonumber \\ 
&
+ 2 (1 \! - \! \alpha) \Bigg[ \frac{1}{2} \hat{d}_{\mathcal{H}\Delta\mathcal{H}}(\mathbb{U}_n, \! \mathbb{U}_t) \! + \! \mathcal{O}\!\left( \! \sqrt{\frac{d \log m' \! + \! \log(\frac{1}{\delta})}{m'} } \! \right) 
\nonumber \\ 
&
+ \hat{\epsilon}_n (\hat{f}) + \hat{\epsilon}_t (\hat{f}) + \mathcal{O}\!\left( \sqrt{ \frac{d \log(\frac{(1-\beta)m}{d}) + \log(\frac{1}{\delta})}{(1-\beta)m} } \right) 
\nonumber \\ 
&
+ \mathcal{O}\!\left( \sqrt{\frac{d \log(\frac{\beta m}{d}) + \log(\frac{1}{\delta})}{\beta m}} \right) \Bigg],
\end{split}
\end{equation}
where $\gamma = \sqrt{\frac{\alpha^2}{\beta} + \frac{(1 - \alpha)^2}{1 - \beta}}$, and $\hat{d}_{\mathcal{H} \Delta \mathcal{H}} (\mathbb{U}_n, \mathbb{U}_t)$ is the empirical $\mathcal{H}$-divergence estimated from noise and target samples in $\mathcal{Z}$.
\end{theorem}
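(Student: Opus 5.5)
The plan is to follow the proof of Theorem 3 in \citet{ben2010theory} for the $\alpha$-weighted empirical minimizer, but to replace the unknown ideal joint error $\lambda$ by empirical quantities computed on $\mathbb{L}_n$ and $\mathbb{L}_t$. The argument is carried out in the shared space $\mathcal{Z}$, where $\widetilde{\mathcal{P}}_t$ and $\widetilde{\mathcal{P}}_n$ are ordinary distributions over a common domain, so the $\mathcal{H}\Delta\mathcal{H}$ machinery applies.

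\textbf{Step 1 (uniform deviation for the weighted error).} Apply the weighted-sample concentration lemma of \citet{ben2010theory} (their Lemma 4, via the VC/Hoeffding argument on $\alpha\hat\epsilon_t+(1-\alpha)\hat\epsilon_n$ with $\beta m$ and $(1-\beta)m$ samples). It gives, uniformly over $f\in\mathcal{F}$ with probability $1-\delta/4$,
\[
|\epsilon_\alpha(f)-\hat\epsilon_\alpha(f)|\le \mathcal{O}\big(\gamma\sqrt{(d\log m+\log(1/\delta))/m}\big),
\]
where $\epsilon_\alpha=\alpha\epsilon_t+(1-\alpha)\epsilon_n$.

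\textbf{Step 2 (relating $\epsilon_t$ and $\epsilon_\alpha$).} For any $f$, $|\epsilon_\alpha(f)-\epsilon_t(f)|=(1-\alpha)|\epsilon_n(f)-\epsilon_t(f)|$. Instead of Ben-David's bound $\tfrac12 d_{\mathcal{H}\Delta\mathcal{H}}+\lambda$, take any $g\in\mathcal{F}$ and use the triangle inequality for disagreement:
\[
|\epsilon_n(f)-\epsilon_t(f)|\le |\epsilon_n(f,g)-\epsilon_t(f,g)|+\epsilon_n(g)+\epsilon_t(g)\le \tfrac12 d_{\mathcal{H}\Delta\mathcal{H}}(\widetilde{\mathcal{P}}_n,\widetilde{\mathcal{P}}_t)+\epsilon_n(g)+\epsilon_t(g).
\]
Choosing $g=\hat f$ yields the terms $\epsilon_n(\hat f)+\epsilon_t(\hat f)$ in place of $\lambda$.

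\textbf{Step 3 (chaining the inequalities).} Write
\[
\epsilon_t(\hat f)\le\epsilon_\alpha(\hat f)+(1-\alpha)\Delta\le\hat\epsilon_\alpha(\hat f)+\text{dev}+(1-\alpha)\Delta\le\hat\epsilon_\alpha(f_t^*)+\text{dev}+(1-\alpha)\Delta\le\epsilon_\alpha(f_t^*)+2\,\text{dev}+(1-\alpha)\Delta\le\epsilon_t(f_t^*)+2\,\text{dev}+2(1-\alpha)\Delta,
\]
where $\Delta$ is the bound from Step 2 and $\text{dev}$ is the deviation from Step 1. The last step applies Step 2 to $f_t^*$ with the same $g=\hat f$, which is legitimate because Step 2 holds for every $f$ with a fixed $g$. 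This produces the factor $2(1-\alpha)$.

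\textbf{Step 4 (replacing true quantities by empirical ones).} Three substitutions are needed, each at confidence $\delta/4$, followed by a union bound over the four events.
\begin{itemize}[itemsep=0em, topsep=0em]
    \item Replace $d_{\mathcal{H}\Delta\mathcal{H}}$ by $\hat d_{\mathcal{H}\Delta\mathcal{H}}(\mathbb{U}_n,\mathbb{U}_t)$ plus $\mathcal{O}(\sqrt{(d\log m'+\log(1/\delta))/m'})$, using Lemma 1 of \citet{ben2010theory}.
    \item Replace $\epsilon_n(\hat f)$ by $\hat\epsilon_n(\hat f)$ via the standard VC uniform bound on $(1-\beta)m$ samples. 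Its rate $\sqrt{(d\log((1-\beta)m/d)+\log(1/\delta))/((1-\beta)m)}$ matches the stated term.
    \item Replace $\epsilon_t(\hat f)$ by $\hat\epsilon_t(\hat f)$ via the same bound on $\beta m$ samples.
\end{itemize}
Uniformity over $\mathcal{F}$ is essential in the last two substitutions, since $\hat f$ is data-dependent.

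\textbf{Main obstacle.} The delicate point is Step 2/3. We must make sure the comparator $g=\hat f$ may be used simultaneously for $\hat f$ and $f_t^*$. We must also make sure the data dependence of $\hat f$ is handled by uniform (rather than pointwise) convergence. Finally, the representation map producing $\mathcal{Z}$ must be treated as fixed, independently of the samples, so that the i.i.d.\ assumptions in $\mathcal{Z}$ hold. I would state this fixed-representation assumption explicitly and then absorb all constants into the $\mathcal{O}(\cdot)$ terms.
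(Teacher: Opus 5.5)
Your proposal is correct and follows essentially the same route as the paper's proof: the same chain through $\epsilon_\alpha$ and $\hat\epsilon_\alpha$ (the $\epsilon_\alpha$–$\epsilon_t$ relation used twice, the weighted uniform deviation used twice, minimality of $\hat f$ in between), followed by replacing $d_{\mathcal{H}\Delta\mathcal{H}}$ with its empirical estimate and the true errors of $\hat f$ with empirical ones via uniform VC bounds under a union bound. The only difference is cosmetic: you plug $g=\hat f$ directly into the triangle-inequality argument, whereas the paper first invokes the lemma with $\lambda$ and then uses $\lambda\le\epsilon_n(\hat f)+\epsilon_t(\hat f)$, which gives the same bound (also, the weighted concentration result is Lemma 5 of \citet{ben2010theory}, not Lemma 4).
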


The proof is provided in \Cref{appendix:proof}. \cref{theo:SSNA} builds upon Theorem 3 in \citep{ben2010theory} and incorporates key insights from \citep{li2021learning}. The resulting bound explicitly accounts for three key terms: \textbf{(i)} the empirical noise error $\hat{\epsilon}_n (\hat{f})$; \textbf{(ii)} the empirical target error $\hat{\epsilon}_t (\hat{f})$; and \textbf{(iii)} the empirical distributional discrepancy $\hat{d}_{\mathcal{H} \Delta \mathcal{H}} (\mathbb{U}_n, \mathbb{U}_t)$, without involving the joint optimal error term $\lambda$.


\subsection{From Bound to Algorithm: Design of NAF}
\label{sec:NAF}

Building on \cref{theo:SSNA}, the generalization bound on the expected target error $\epsilon_t (\hat{f})$ can be minimized 
by jointly reducing $\hat{\epsilon}_t (\hat{f})$, $\hat{\epsilon}_n (\hat{f})$, and $\hat{d}_{\mathcal{H} \Delta \mathcal{H}} (\mathbb{U}_n, \mathbb{U}_t)$ in $\mathcal{Z}$. Accordingly, we design NAF to project target samples and noise into $\mathcal{Z}$ by minimizing three components: \textbf{(i)} $\mathcal{L}_t$: the empirical risk of labeled target samples, corresponding to $\hat{\epsilon}_t (\hat{f})$; \textbf{(ii)} $\mathcal{L}_n$: the empirical risk of noise, corresponding to $\hat{\epsilon}_n (\hat{f})$; and \textbf{(iii)} $\mathcal{L}_{n,t}$: the distributional discrepancy between projected domains, whose minimization implicitly reduces $\hat{d}_{\mathcal{H} \Delta \mathcal{H}} (\mathbb{U}_n, \mathbb{U}_t)$. Thus, the optimization objective of the NAF is formulated by
\begin{equation}
\label{eq:NAframework}
\min_{g_t, g_n, f} \mathcal{L}_t + \alpha \mathcal{L}_n + \beta \mathcal{L}_{n,t},
\end{equation}
where $g_t (\cdot)$ is a representation extractor projecting target samples from $\mathcal{X}$ to $\mathcal{Z}$, $g_n (\cdot)$ is a noise projector mapping noise from $\mathcal{E}$ to $\mathcal{Z}$, $f (\cdot)$ is a classifier (see \cref{fig:bound}), and $\alpha$, $\beta$ are two positive trade-off parameters to control the importance of $\mathcal{L}_n$ and $\mathcal{L}_{n,t}$, respectively. By optimizing the problem (\ref{eq:NAframework}), the generalization bound of the target domain can be effectively tightened, thereby improving the generalization performance.

NAF is formulated as a general framework with flexible instantiations for its components. In the implementation, $\mathcal{L}_t$ and $\mathcal{L}_n$ are instantiated with the \textit{cross-entropy loss}, and $\mathcal{L}_{n,t}$ can be realized through various distribution alignment mechanisms. In practice, we design five mechanisms and empirically adopt the \textit{Negative Domain Similarity} (NDS) mechanism, while detailed analyses of alternative designs are provided in \textbf{Q8} of \cref{sec:Empirical_ana}. 
NDS measures the discrepancy between the projected target and noise domains by computing the cosine similarities between their global means and class-wise means, averaging those similarities, and then negating the result (see details in \Cref{appendix:distribution_alignment_mechanisms}).
Moreover, we use the classifier $f (\cdot)$ to assign pseudo-labels to unlabeled target samples and iteratively update them to estimate class means.

\begin{figure*}[t]
\centering
\subfloat[NAF vs. ERM \label{fig:ERMvsNAF}]{
\includegraphics[width=0.38\textwidth, height=0.24\textwidth]
{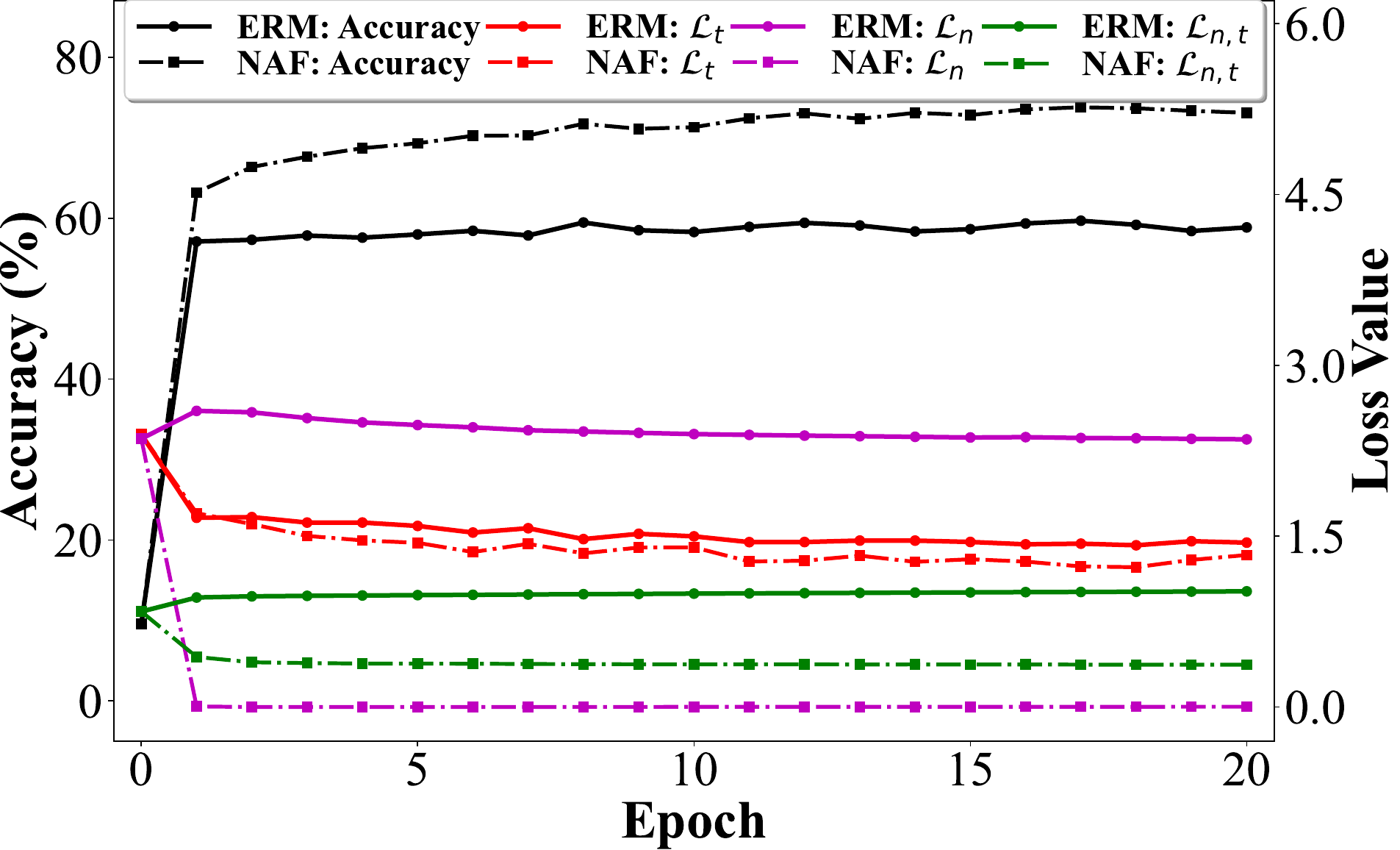}}
\subfloat[NAF \label{fig:tsne_naf}]{\includegraphics[width=0.3\textwidth]{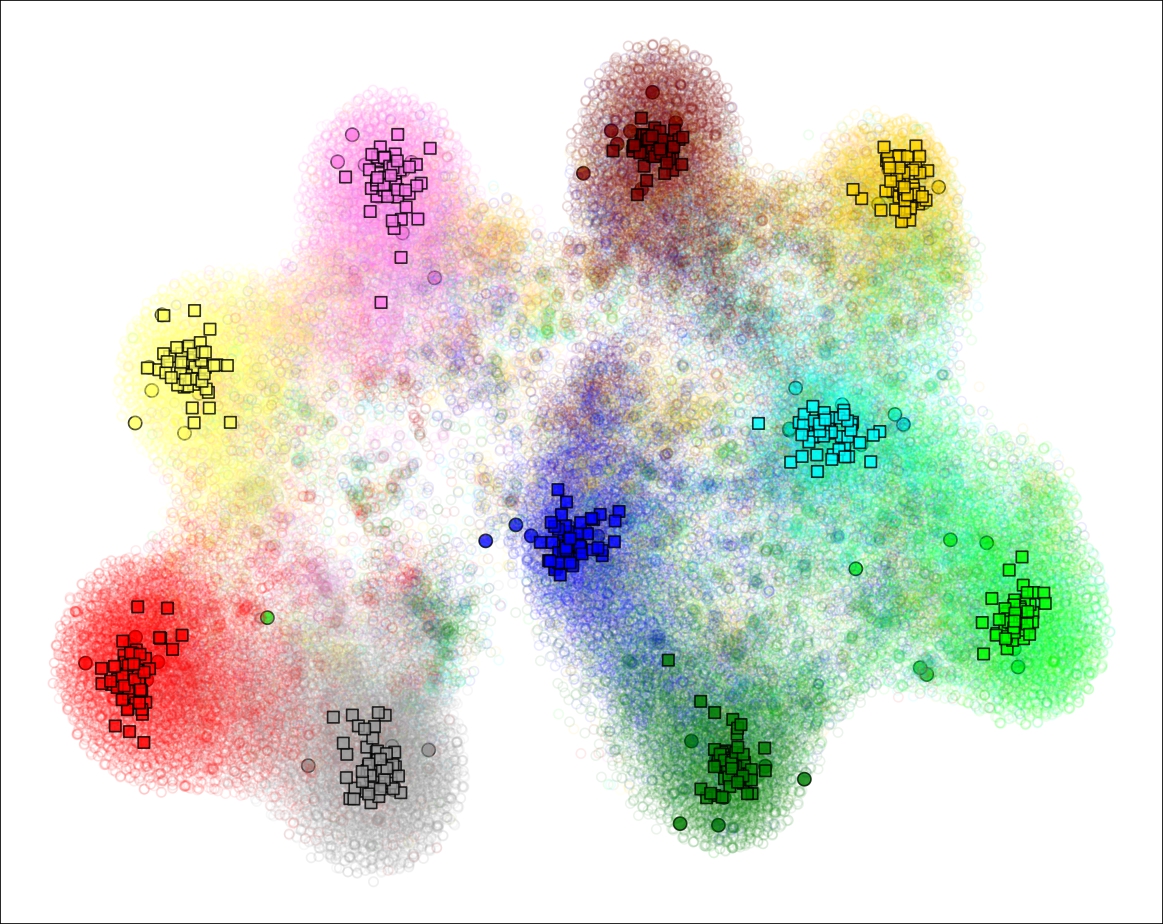}}
\hspace{-1ex}
\subfloat[ERM \label{fig:tsne_erm}]{\includegraphics[width=0.3\textwidth]{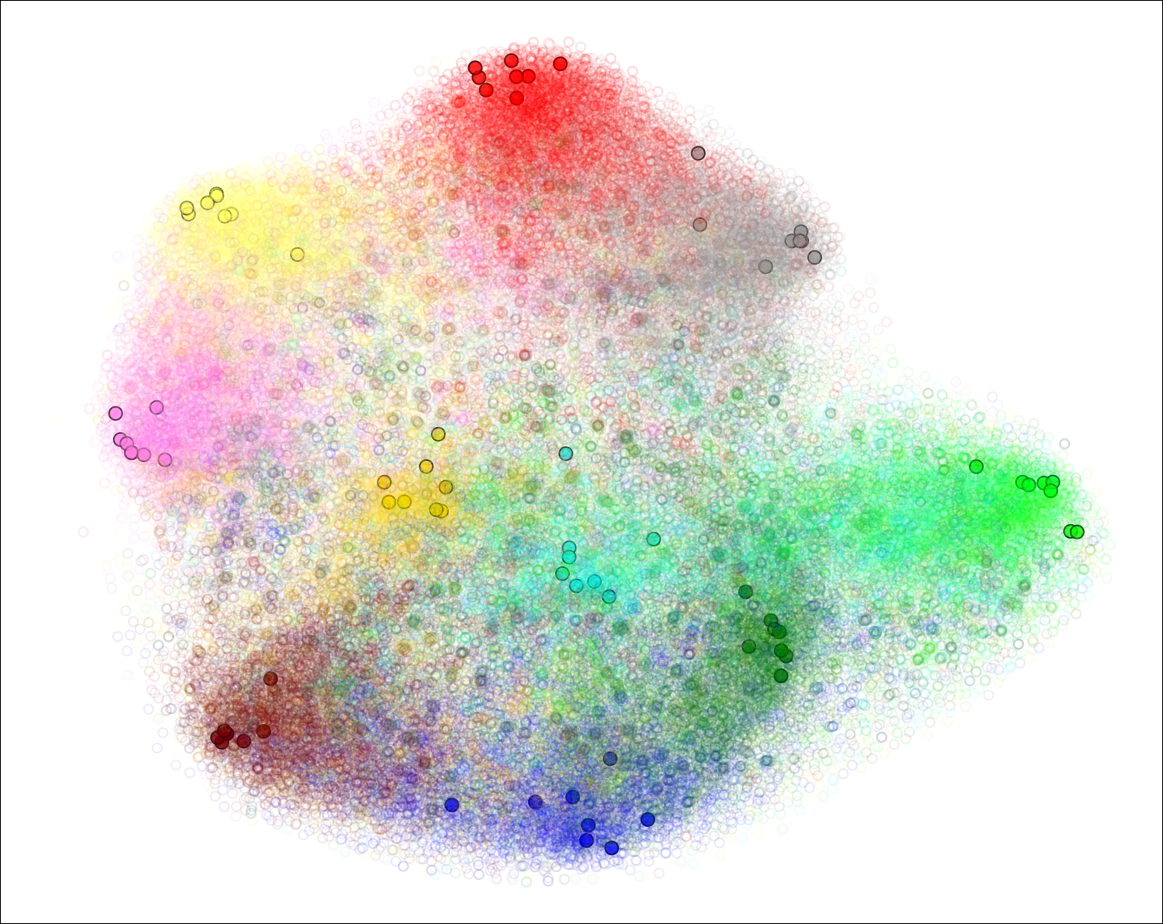}}
\caption{(a) Training loss and accuracy curves for NAF and ERM on CIFAR-10 with ResNet-18. $\mathcal{L}_t$ denotes the empirical risk of labeled target samples, $\mathcal{L}_n$ is the empirical risk of noise, and $\mathcal{L}_{n,t}$ measures the distributional discrepancy between domains.
(b) Representations learned by NAF on CIFAR-10 with ResNet-18, where $\blacksquare$’ indicates noise representation; $\bullet$’ and `$\circ$’ represent labeled and unlabeled target representations, respectively.
(c) Representations learned by ERM on CIFAR-10 with ResNet-18, with the same symbol scheme as in (b). Colors correspond to different classes.}
\label{fig:tsne}
\end{figure*}

\subsection{Does NAF Tighten the Bound?}
\label{sec:empirical_verification}


Next, we present empirical results showing that NAF achieves a tighter generalization bound on the target domain compared to the supervised learning baseline, \textit{i.e.}, ERM, which uses only $\mathcal{L}_t$.
To construct a noise domain, we first sample $C$ class means from a standard Gaussian distribution in a 1024-dimensional space. For each class, we then assign an identity covariance matrix. Based on each class mean and its corresponding covariance matrix, we then sample 50 noise from the associated Gaussian distribution to form the noise domain.
\cref{fig:ERMvsNAF} plots the training trajectories of $\mathcal{L}_t$, $\mathcal{L}_n$, and $\mathcal{L}_{n,t}$, along with the test accuracy curves for NAF and ERM on CIFAR-10 using ResNet-18, with 4 labeled samples per class. Several insightful observations can be drawn. \textbf{(1)} Both methods demonstrate notable reductions in $\mathcal{L}_t$, as it is explicitly minimized in their respective objective functions. \textbf{(2)} The values of $\mathcal{L}_n$ and $\mathcal{L}_{n,t}$ in ERM are consistently higher than those in NAF, which is reasonable since ERM does not explicitly minimize them. \textbf{(3)} When $\mathcal{L}_t$ is jointly minimized with $\mathcal{L}_n$ and $\mathcal{L}_{n,t}$ in NAF, the accuracy significantly improves over ERM. \textit{\textbf{Since $\mathcal{L}_n$ and $\mathcal{L}_{n,t}$ are derived from the noise domain, this improvement indicates that incorporating the noise domain tightens the target generalization bound, producing positive transfer}}. This observation aligns with the theoretical result in \cref{theo:SSNA}.

Furthermore, we visualize the representations learned by NAF and ERM in the above experiment using t-SNE \citep{Van2008Visualizing}. As shown in \cref{fig:tsne_naf}, NAF produces a clear discriminative structure, where noise representations from different classes form well-separated clusters and align closely with the corresponding target representations. 
Notably, because the noise is fed into $g_n(\cdot)$ solely to generate its representations, the discriminative structure observed in the noise domain arises from the predefined noise distributions and from the supervised training applied to noise representations in the representation space.
In contrast, ERM, as plotted in \cref{fig:tsne_erm}, exhibits less discriminable target representations. 
This difference can be attributed to the joint minimization of $\mathcal{L}_n$ and $\mathcal{L}_{n,t}$: \textit{\textbf{minimizing $\mathcal{L}_n$ enforces noise representations to form compact and well-separated clusters across classes, and minimizing $\mathcal{L}_{n,t}$ aligns all target representations with those clusters, thus producing more discriminative target representations}}.

\subsection{Discussions}

\textbf{Theoretical Implications}. Unlike previous analyses in semi-supervised TL \citep{ben2010theory,li2021learning}, which assume a semantically meaningful source domain and aim to derive tighter bounds, \cref{theo:SSNA} considers a domain-agnostic noise domain. It suggests that the expected target error bound may still be tightened by jointly reducing empirical errors across both domains together with their distributional discrepancy in $\mathcal{Z}$. This also relaxes the common semi-supervised TL assumption that the source and target domains are semantically related. Thus, our analysis provides a distinct theoretical perspective.

\begin{table*}[t]
  \centering
  \caption{Accuracy (\%) comparison on CIFAR-10, CIFAR-100, DTD-47, and Caltech-101 using ResNet-18 and ResNet-50, respectively. Here, $\bm{\Delta}$ indicates the performance gain introduced by NAF.}
  \setlength{\tabcolsep}{0.6cm}
  \resizebox{\textwidth}{!}{
    \begin{tabular}{ccccccccc}
    \toprule
    Datasets & \multicolumn{2}{c}{CIFAR-10} & \multicolumn{2}{c}{CIFAR-100} & \multicolumn{2}{c}{DTD-47} & \multicolumn{2}{c}{Caltech-101} \\
    \cmidrule(lr){1-1} \cmidrule(lr){2-3} \cmidrule(lr){4-5} \cmidrule(lr){6-7} \cmidrule(lr){8-9}
    ResNet-18 & Top-1 & Top-5 & Top-1 & Top-5 & Top-1 & Top-5 & Top-1 & Top-5 \\
    \midrule
    ERM   & 55.55 & 92.85 & 41.43 & 71.40 & 45.80 & 74.26 & 79.20 & 93.29 \\
    \rowcolor[rgb]{ .949,  .949,  .949} NAF   & 67.90 & 96.38 & 49.04 & 80.56 & 50.18 & 77.98 & 81.94 & 95.01 \\
    \rowcolor[rgb]{ .949,  .949,  .949} $\bm{\Delta}$ & \textcolor{blackishgreen}{\textbf{+12.35}} & \textcolor{blackishgreen}{\textbf{+3.53}} & \textcolor{blackishgreen}{\textbf{+7.61}} & \textcolor{blackishgreen}{\textbf{+9.16}} & \textcolor{blackishgreen}{\textbf{+4.38}} & \textcolor{blackishgreen}{\textbf{+3.72}} & \textcolor{blackishgreen}{\textbf{+2.74}} & \textcolor{blackishgreen}{\textbf{+1.72}} \\
    \midrule
    ResNet-50 & Top-1 & Top-5 & Top-1 & Top-5 & Top-1 & Top-5 & Top-1 & Top-5 \\
    \midrule
    ERM   & 58.83 & 94.25 & 46.71 & 76.53 & 49.56 & 76.65 & 81.99 & 94.70 \\
    \rowcolor[rgb]{ .949,  .949,  .949} NAF   & 73.98 & 97.01 & 52.82 & 82.16 & 53.97 & 79.68 & 84.41 & 96.14 \\
    \rowcolor[rgb]{ .949,  .949,  .949} $\bm{\Delta}$ & \textcolor{blackishgreen}{\textbf{+15.15}} & \textcolor{blackishgreen}{\textbf{+2.76}} & \textcolor{blackishgreen}{\textbf{+6.11}} & \textcolor{blackishgreen}{\textbf{+5.63}} & \textcolor{blackishgreen}{\textbf{+4.41}} & \textcolor{blackishgreen}{\textbf{+3.03}} & \textcolor{blackishgreen}{\textbf{+2.42}} & \textcolor{blackishgreen}{\textbf{+1.44}} \\
    \bottomrule
    \end{tabular}%
    }
  \label{tab:Acc_main}%
\end{table*}%

\textbf{Distinction from Data Augmentation}. While SSNA introduces additional noise, it fundamentally differs from data augmentation. Data augmentation typically enriches the target distribution via interpolation (\textit{e.g.}, mixup \citep{zhang2018mixup}), transformations (\textit{e.g.}, rotations \citep{zhang2021flexmatch}), or generative models (\textit{e.g.}, diffusion \citep{ho2020denoising}).
In contrast, SSNA first generates noise from simple distributions (\textit{e.g.}, Gaussian distributions), which may differ substantially from the target distribution. The noise and target domains are then aligned in a shared representation space, allowing the discriminative structure of the noise domain to guide the learning of target representations. Hence, SSNA is a domain-level adaptation problem rather than a data-level augmentation problem.

\section{Experiments}
\label{sec:experiments}

In this section, we evaluate the proposed NAF.

\subsection{Setup}
\label{sec:expSet}

\textbf{Datasets}. We adopt the following benchmarks: CIFAR-10/100 \citep{krizhevsky2009learning}, DTD \citep{cimpoi2014describing}, Caltech-101 \citep{fei2004learning}, CUB-200 \citep{wah2011caltech}, Oxford Flowers-102 \citep{nilsback2008automated}, Stanford Cars-196 \citep{krause2013object}, ImageNet-1K \citep{deng2009imagenet}, and AG News-4 \citep{zhang2015characterlevel}. For the first seven vision datasets, we randomly select four labeled samples per class from the original training set, treating the remaining samples as unlabeled; for ImageNet-1K, we sample 100 labeled examples per class due to its large scale, with the rest used as unlabeled data. AG News-4 is a text classification dataset consisting of news articles from four categories, for which we randomly draw four labeled samples and 1,000 unlabeled samples. Further details are provided in \Cref{appendix:Dataset}.

\textbf{Noise Domain Construction}. 
For consistency and simplicity across tasks, we construct the noise domain using the produce described in \cref{sec:empirical_verification}, unless otherwise stated.

\textbf{Evaluation Metric.} We evaluate performance using the classification accuracy in $\mathcal{D}_e$. For a fair comparison, we report the accuracy of the last epoch. In most cases, results are averaged over three independent runs, while single-run accuracy is reported in certain settings (\textit{e.g.}, ImageNet-1K).

\subsection{Main Experiments}
\label{sec:Primary_exp}

\textbf{Q1. How does NAF perform compared to ERM on standard classification benchmarks?}
Table~\ref{tab:Acc_main} lists the results on CIFAR-10, CIFAR-100, DTD-47 and Caltech-101 using ResNet-18 and ResNet-50. As shown, NAF consistently outperforms ERM, which represents the standard supervised baseline, across all datasets. Specifically, NAF yields notable Top-1 accuracy improvements of 12.35\% and 15.15\% over ERM on CIFAR-10 with ResNet-18 and ResNet-50, respectively. This consistent advantage over ERM confirms that NAF achieves positive transfer from the noise domain to the target domain. 
The reason is that NAF introduces the noise domain with class-discriminative structure and enforces distributional alignment between domains. This process encourages all target representations to form more separable clusters, which enhances class discriminability and thus improves the generalization of the target domain.

\textbf{Q2. Can NAF achieve improvements over ERM on fine-grained classification tasks?} 
\cref{tab:Acc_fine-grained} presents the results on three fine-grained classification datasets, including CUB-200, OxfordFlowers-102, and StanfordCars-196, using ResNet-18.
As observed, NAF consistently outperforms ERM by a large margin across all datasets. Those results demonstrate that NAF can effectively leverage the noise domain to achieve positive transfer in fine-grained classification tasks.

\begin{table}[t]
  \centering
  \caption{Accuracy (\%) comparison on fine-grained classification datasets using ResNet-18.}
  \setlength{\tabcolsep}{0.1cm}
  \resizebox{0.5\textwidth}{!}{
    \begin{tabular}{cccc}
    \toprule
    Datasets  & CUB-200 & OxfordFlowers-102 & StanfordCars-196 \\
    \midrule
    ERM   & 41.92  & 81.07  & 28.01  \\
    \rowcolor[rgb]{ .949,  .949,  .949} NAF  & 50.86  & 86.58  & 35.75  \\
    \rowcolor[rgb]{ .949,  .949,  .949} $\bm{\Delta}$ & \textcolor{blackishgreen}{\textbf{+8.94}}  & \textcolor{blackishgreen}{\textbf{+5.51}}  & \textcolor{blackishgreen}{\textbf{+7.74}}  \\
    \bottomrule
    \end{tabular}
    }
  \label{tab:Acc_fine-grained}%
\end{table}%
\begin{table*}[t]
  \centering
  \caption{Accuracy (\%) comparison on CIFAR-10 and CIFAR-100 using ResNet-18. Here, $\bm{\Delta}$ indicates the performance gain introduced by NAF.}
  \setlength{\tabcolsep}{0.15cm}
  \resizebox{\textwidth}{!}{
    \begin{tabular}{ccccccccccc}
    \toprule
    Datasets & \multicolumn{5}{c}{CIFAR-10}         & \multicolumn{5}{c}{CIFAR-100} \\
    \cmidrule(lr){1-1} \cmidrule(lr){2-6} \cmidrule(lr){7-11}
    Epoch & 5     & 10    & 15    & 20    & Average & 5     & 10    & 15    & 20    & Average \\
    \midrule
    UDA \citep{xie2020unsupervised}  & 51.67 & 55.37 & 56.03 & 56.11 & 54.80 & 38.30 & 42.99 & 45.93 & 47.41 & 43.66 \\
    \rowcolor[rgb]{ .949,  .949,  .949} UDA + NAF & 73.55 & 76.16 & 76.52 & 76.94 & 75.79 & 40.37 & 45.44 & 47.82 & 48.80 & 45.61 \\
    \rowcolor[rgb]{ .949,  .949,  .949} $\bm{\Delta}$ & \textcolor{blackishgreen}{\textbf{+21.88}} & \textcolor{blackishgreen}{\textbf{+20.79}} & \textcolor{blackishgreen}{\textbf{+20.49}} & \textcolor{blackishgreen}{\textbf{+20.83}} & \textcolor{blackishgreen}{\textbf{+20.99}} & \textcolor{blackishgreen}{\textbf{+2.07}} & \textcolor{blackishgreen}{\textbf{+2.45}} & \textcolor{blackishgreen}{\textbf{+1.89}} & \textcolor{blackishgreen}{\textbf{+1.39}} & \textcolor{blackishgreen}{\textbf{+1.95}} \\
    \midrule
    FixMatch \citep{sohn2020fixmatch} & 66.41 & 68.41 & 69.01 & 69.40 & 68.31 & 39.38 & 40.78 & 41.98 & 42.45 & 41.15 \\
    \rowcolor[rgb]{ .949,  .949,  .949} FixMatch + NAF & 75.51 & 77.89 & 79.00 & 79.31 & 77.93 & 40.97 & 43.28 & 44.06 & 44.93 & 43.31 \\
    \rowcolor[rgb]{ .949,  .949,  .949} $\bm{\Delta}$ & \textcolor{blackishgreen}{\textbf{+9.10}} & \textcolor{blackishgreen}{\textbf{+9.48}} & \textcolor{blackishgreen}{\textbf{+9.99}} & \textcolor{blackishgreen}{\textbf{+9.91}} & \textcolor{blackishgreen}{\textbf{+9.62}} & \textcolor{blackishgreen}{\textbf{+1.59}} & \textcolor{blackishgreen}{\textbf{+2.50}} & \textcolor{blackishgreen}{\textbf{+2.08}} & \textcolor{blackishgreen}{\textbf{+2.48}} & \textcolor{blackishgreen}{\textbf{+2.16}} \\
    \midrule
    FlexMatch \citep{zhang2021flexmatch} & 73.61 & 79.85 & 83.46 & 84.53 & 80.36 & 45.41 & 50.28 & 51.91 & 54.30 & 50.48 \\
    \rowcolor[rgb]{ .949,  .949,  .949} FlexMatch + NAF & 79.22 & 82.72 & 84.32 & 84.90 & 82.79 & 48.10 & 52.91 & 54.97 & 55.73 & 52.93 \\
    \rowcolor[rgb]{ .949,  .949,  .949} $\bm{\Delta}$ & \textcolor{blackishgreen}{\textbf{+5.61}} & \textcolor{blackishgreen}{\textbf{+2.87}} & \textcolor{blackishgreen}{\textbf{+0.86}} & \textcolor{blackishgreen}{\textbf{+0.37}} & \textcolor{blackishgreen}{\textbf{+2.43}} & \textcolor{blackishgreen}{\textbf{+2.69}} & \textcolor{blackishgreen}{\textbf{+2.63}} & \textcolor{blackishgreen}{\textbf{+3.06}} & \textcolor{blackishgreen}{\textbf{+1.43}} & \textcolor{blackishgreen}{\textbf{+2.45}} \\
    \midrule
    DebiasMatch \citep{wang2022debiased} & 68.71 & 77.68 & 79.86 & 82.04 & 77.07 & 46.71 & 51.97 & 54.73 & 56.30 & 52.43 \\
    \rowcolor[rgb]{ .949,  .949,  .949} DebiasMatch + NAF & 76.12 & 80.89 & 82.54 & 83.05 & 80.65 & 49.57 & 54.02 & 56.36 & 57.45 & 54.35 \\
    \rowcolor[rgb]{ .949,  .949,  .949} $\bm{\Delta}$ & \textcolor{blackishgreen}{\textbf{+7.41}} & \textcolor{blackishgreen}{\textbf{+3.21}} & \textcolor{blackishgreen}{\textbf{+2.68}} & \textcolor{blackishgreen}{\textbf{+1.01}} & \textcolor{blackishgreen}{\textbf{+3.58}} & \textcolor{blackishgreen}{\textbf{+2.86}} & \textcolor{blackishgreen}{\textbf{+2.05}} & \textcolor{blackishgreen}{\textbf{+1.63}} & \textcolor{blackishgreen}{\textbf{+1.15}} & \textcolor{blackishgreen}{\textbf{+1.92}} \\
    \midrule
    DST \citep{chen2022debiased}  & 78.40 & 82.84 & 84.48 & 85.47 & 82.80 & 45.40 & 49.74 & 51.68 & 53.17 & 50.00 \\
    \rowcolor[rgb]{ .949,  .949,  .949} DST + NAF & 80.70 & 83.46 & 84.87 & 85.53 & 83.64 & 48.73 & 52.28 & 54.10 & 54.93 & 52.51 \\
    \rowcolor[rgb]{ .949,  .949,  .949} $\bm{\Delta}$ & \textcolor{blackishgreen}{\textbf{+2.30}} & \textcolor{blackishgreen}{\textbf{+0.62}} & \textcolor{blackishgreen}{\textbf{+0.39}} & \textcolor{blackishgreen}{\textbf{+0.06}} & \textcolor{blackishgreen}{\textbf{+0.84}} & \textcolor{blackishgreen}{\textbf{+3.33}} & \textcolor{blackishgreen}{\textbf{+2.54}} & \textcolor{blackishgreen}{\textbf{+2.42}} & \textcolor{blackishgreen}{\textbf{+1.76}} & \textcolor{blackishgreen}{\textbf{+2.51}} \\
    \midrule  
    LERM \citep{zhang2024rethinking} & 60.03 & 62.42 & 63.81 & 64.77 & 62.76 & 48.10 & 50.13 & 50.83 & 51.66 & 50.18 \\
    \rowcolor[rgb]{ .949,  .949,  .949} LERM + NAF & 66.01 & 67.34 & 67.83 & 68.00 & 67.30 & 49.42 & 51.06 & 51.65 & 51.97 & 51.03 \\
    \rowcolor[rgb]{ .949,  .949,  .949} $\bm{\Delta}$ & \textcolor{blackishgreen}{\textbf{+5.98}} & \textcolor{blackishgreen}{\textbf{+4.92}} & \textcolor{blackishgreen}{\textbf{+4.02}} & \textcolor{blackishgreen}{\textbf{+3.23}} & \textcolor{blackishgreen}{\textbf{+4.54}} & \textcolor{blackishgreen}{\textbf{+1.32}} & \textcolor{blackishgreen}{\textbf{+0.93}} & \textcolor{blackishgreen}{\textbf{+0.82}} & \textcolor{blackishgreen}{\textbf{+0.31}} & \textcolor{blackishgreen}{\textbf{+0.85}} \\
    \midrule  
    SA-FixMatch \citep{li2025towards} & 64.00 & 66.70 & 68.46 & 68.97 & 67.03 & 42.77 & 44.69 & 45.73 & 46.97 & 45.04 \\
    \rowcolor[rgb]{ .949,  .949,  .949} SA-FixMatch + NAF & 70.27 & 71.97 & 72.49 & 71.85 & 71.65 & 45.42 & 48.27 & 48.84 & 49.00 & 47.88 \\
    \rowcolor[rgb]{ .949,  .949,  .949} $\bm{\Delta}$ & \textcolor{blackishgreen}{\textbf{+6.27}} & \textcolor{blackishgreen}{\textbf{+5.27}} & \textcolor{blackishgreen}{\textbf{+4.03}} & \textcolor{blackishgreen}{\textbf{+2.88}} & \textcolor{blackishgreen}{\textbf{+4.62}} & \textcolor{blackishgreen}{\textbf{+2.65}} & \textcolor{blackishgreen}{\textbf{+3.58}} & \textcolor{blackishgreen}{\textbf{+3.11}} & \textcolor{blackishgreen}{\textbf{+2.03}} & \textcolor{blackishgreen}{\textbf{+2.84}} \\
    \bottomrule
    \end{tabular}%
    }
  \label{tab:Acc_SOTA}%
  \vspace{-1ex}
\end{table*}%

\textbf{Q3. Does NAF scale to large-scale datasets such as ImageNet-1K?}
We evaluate NAF on ImageNet-1K using ResNet-18. NAF achieves an accuracy of 37.10\%, outperforming ERM (36.11\%) by 0.99\%. This result highlights the effectiveness of NAF on a large-scale dataset with 1,000 classes, suggesting its applicability to complex settings.

\textbf{Q4. Is NAF effective on text categorization tasks?} We conduct experiments on AG News-4 \citep{zhang2015characterlevel} using BERT \citep{devlin2019bert}. NAF achieves an accuracy of 82.82\%, outperforming ERM, which achieves 78.64\%. The results suggest that NAF may facilitate knowledge transfer in non-visual tasks.

\textbf{Q5. Is NAF effective as a plug-in when combined with existing SSL methods?} 
To investigate this question, we conduct experiments using seven state-of-the-art (SOTA) SSL methods: UDA \citep{xie2020unsupervised}, FixMatch \citep{sohn2020fixmatch}, FlexMatch \citep{zhang2021flexmatch}, DebiasMatch \citep{wang2022debiased}, DST \citep{chen2022debiased}, LERM \citep{zhang2024rethinking}, and SA-FixMatch \citep{li2025towards}. Details of those methods are provided in \Cref{appendix:relatedWork}. 
NAF can be seamlessly integrated as a plugin into those SOTA SSL methods by incorporating $\mathcal{L}_n$ and $\mathcal{L}_{n, t}$ into their objective functions.
\cref{tab:Acc_SOTA} reports the results at the 5th, 10th, 15th, and 20th epochs on CIFAR-10 and CIFAR-100 using ResNet-18. We observe that incorporating NAF leads to consistent performance gains across all SSL methods. Specifically, NAF improves accuracy by 20.83\% and 9.91\% over UDA and FixMatch, respectively, at the 20th epoch on CIFAR-10.
Those results indicate that NAF effectively enhances the generalization of SOTA methods by transferring knowledge from the noise domain.
Additional results on DTD-47 and Caltech-101 are offered in \Cref{appendix:SuppExp}.

\subsection{Analysis}
\label{sec:Empirical_ana}


\textbf{Q6. How does the impact of NAF change as the number of labeled target samples varies?}
Table~\ref{tab:labeled_numbers} reports the results on CIFAR-10 using ResNet-18 with different numbers of labeled samples per class. We have several insightful observations.
\textbf{(1)} When the number of labeled target samples is zero, both ERM and NAF perform poorly. For ERM, the absence of labeled target samples hinders the effective learning of unlabeled samples, resulting in significant performance degradation. In NAF, the noise comes from a space different from that of the target domain, and the target samples are unlabeled. As a result, the class-discriminative structure of the noise cannot be effectively aligned with the target domain. \textbf{(2)} When the number of labeled target samples is non-zero, NAF outperforms ERM across all scenarios. Those results indicate that NAF effectively leverages both labeled target samples and noise to guide the learning of unlabeled target samples, enhancing the generalization of the target domain.

\begin{table}[t]
  \centering
  \caption{Accuracy (\%) comparison on CIFAR-100 using ResNet-18 with different numbers of labeled target samples per class ($n_l^c$).}
  \setlength{\tabcolsep}{0.2cm}
  \resizebox{0.48\textwidth}{!}{
    \begin{tabular}{ccccccc}
    \toprule
    $n_l^c$ & 0     & 4     & 8     & 12    & 16    & 20 \\
    \midrule
    ERM   & 0.97  & 42.24  & 54.11 & 58.27 & 61.64 & 63.85 \\
    \rowcolor[rgb]{ .949,  .949,  .949} NAF   & 1.34  & 49.98 & 59.51 & 62.21 & 64.23 & 66.45 \\
    \bottomrule
    \end{tabular}%
    }
  \label{tab:labeled_numbers}%
  \vspace{-1ex}
\end{table}%

\textbf{Q7. How do $\mathcal{L}_n$ and $\mathcal{L}_{n,t}$ influence the performance of NAF?}
We examine two NAF variants: \textbf{(1)} NAF (w/o $\mathcal{L}_n$), which ablates $\mathcal{L}_n$; and \textbf{(2)} NAF (w/o $\mathcal{L}_{n,t}$), which removes $\mathcal{L}_{n,t}$. Additionally, ERM can be seen as a NAF variant that eliminates both $\mathcal{L}_n$ and $\mathcal{L}_{n,t}$. The results on CIFAR-100 using ResNet-18 are shown in \cref{tab:ablation}. We observe that NAF outperforms all variants, indicating that both losses are beneficial. Moreover, NAF (w/o $\mathcal{L}_n$) outperforms NAF (w/o $\mathcal{L}_{n,t}$), suggesting that reducing distributional divergence between domains is more crucial.

\begin{table}[t]
    \centering 
     \setlength{\tabcolsep}{0.3cm}
    \captionof{table}{Accuracy (\%) of NAF variants on CIFAR-100 using ResNet-18.}
    \resizebox{0.48\textwidth}{!}{
    \begin{tabular}{cccc}
    \toprule
    ERM  & NAF (w/o $\mathcal{L}_n$) & NAF (w/o $\mathcal{L}_{n,t}$) & NAF \\
    \midrule
    42.24 & 47.33 & 40.64 & 49.98 \\
    \bottomrule
    \end{tabular}%
    }
    \label{tab:ablation}%
    \vspace{-2ex}
\end{table}

\textbf{Q8. How does NAF perform under different distribution alignment mechanisms?}
NAF is a general framework that can incorporate various distribution alignment mechanisms, and in our implementation, we use NDS as the alignment mechanism.
To verify the generality of NAF, we consider several alternative alignment mechanisms.
\textbf{(1)} \textit{Negative Sample Similarity} (NSS): It calculates the negative average cosine similarities between all noise-target pairs from the same class.
\textbf{(2)} \textit{Negative Contrastive Domain Similarity} (NCDS): It computes a contrastive loss \citep{radford2021learning} over class-wise means across the noise and target domains.
\textbf{(3)} \textit{Negative Contrastive Sample Similarity} (NCSS): It defines a regression loss that aligns the cosine similarity of each noise–target pair to a target value: $+1$ for same-class pairs and $-1$ for different-class pairs.
\textbf{(4)} \textit{Euclidean Domain Distance} (EDD): It computes the average Euclidean distance between the global and class-wise means of the noise and target domains.
Their specific formulations are defined in \Cref{appendix:distribution_alignment_mechanisms}.
\cref{tab:distribution_align} lists the results on CIFAR-100 using ResNet-18. NAF (NDS) achieves the highest performance, verifying that NDS effectively captures distributional divergence across domains. In contrast, NAF (EDD) performs the worst, suggesting that Euclidean distance may be less suitable than cosine-based measures in this context. NAF (NSS), NAF (NCDS), and NAF (NCSS) also outperform ERM, confirming the generality of NAF in accommodating different alignment mechanisms. 

\begin{table}[htbp]
  \centering
  \setlength{\tabcolsep}{0.03cm}
  \captionof{table}{Accuracy (\%) of NAF with various distributional alignment mechanisms on CIFAR-100 using ResNet-18.}
  \resizebox{0.48\textwidth}{!}{
    \begin{tabular}{cccccc}
    \toprule
    NAF (NDS) & NAF (NSS) & NAF (NCDS) & NAF (NCSS) & NAF (EDD) & ERM \\
    \midrule
    49.98 & 48.65 & 47.20 & 44.27 & 20.03 & 42.24 \\
    \bottomrule
    \end{tabular}%
    }
  \label{tab:distribution_align}%
\end{table}

\textbf{Q9. What happens when the noise domain loses its discriminative structure?} To verify the role of the discriminative structure of the noise domain, we evaluate a variant of NAF termed NAF with Single Point, \textit{i.e.}, NAF (SP). In NAF (SP), a single noise vector is sampled from a standard Gaussian distribution and assigned to all classes, with each class receiving 50 identical copies, effectively removing any class-discriminative structure. On CIFAR-10, NAF (SP) achieves 33.34\% accuracy, substantially lower than ERM’s 58.15\%. On CIFAR-100, the gap is even larger, with NAF (SP) at 6.79\% versus ERM at 42.24\%. 
The dramatic performance drop indicates that collapsing all noise to a single point causes negative transfer, as the noise domain no longer provides class-discriminative structure for domain alignment. This suggests that NAF leverages the class-discriminative structure in the noise domain to facilitate improved generalization in the target domain, highlighting the importance of preserving such structure.

\textbf{Q10. How does NAF perform under distinct noise generation strategies?} We conduct experiments by varying the noise generation strategies across three dimensions. \textbf{(1)} \textit{Covariance Scale}: In the original setup, we first sample a mean vector for each class from a standard Gaussian distribution. Next, for each class, we generate individual noise from a Gaussian distribution with the corresponding mean vector and identity covariance $\mathbf{I}$. We additionally evaluate two configurations in which all class covariances are scaled to $0.1 \cdot \mathbf{I}$ and $10 \cdot \mathbf{I}$. \textbf{(2)} \textit{Noise Dimensionality}: In the original setup, the noise dimensionality is set to 1024. We additionally evaluate two configurations with noise dimensionalities of 512 and 2048. \textbf{(3)} \textit{Distribution Type}: In the original setup, the noise is drawn from a Gaussian distribution. We additionally test the log-normal distribution and the Laplace distribution. The results on CIFAR-100 using ResNet-18, listed in \cref{tab:distribution_type}, indicate that NAF achieves comparable performance across a variety of noise settings, including variations in covariance scale, noise dimensionality, and distribution type. Those results suggest that NAF has the potential to accommodate different noise configurations.


\begin{table}[htbp]
\centering
\setlength{\tabcolsep}{0.01cm}
\captionof{table}{Accuracy (\%) comparison on CIFAR-100 using ResNet-18 with noise drawn from various noise generation strategies. Here, $\bm{\mu}_c$ is the class mean belonging to class $c$, and $p$ is the dimensionality of the noise.}
\resizebox{0.49\textwidth}{!}{
\begin{tabular}{ccc}
\toprule
\multicolumn{1}{c}{Noise Configuration} & \multicolumn{1}{c}{Noise Distribution} & \multicolumn{1}{c}{Accuracy} \\
\midrule
Baseline & Gaussian: $\mathcal{N}(\bm{\mu}_c, \mathbf{I}), p = 1024$ & 49.98 \\
\midrule
Covariance Scale
& Gaussian: $\mathcal{N}(\bm{\mu}_c, 0.1 \cdot \mathbf{I}), p = 1024$ & 50.38 \\
& Gaussian: $\mathcal{N}(\bm{\mu}_c, 10 \cdot \mathbf{I}), p = 1024$ & 47.64 \\
\midrule
Noise Dimensionality
& Gaussian: $\mathcal{N}(\bm{\mu}_c, \mathbf{I}), p = 512$ & 49.44 \\
& Gaussian: $\mathcal{N}(\bm{\mu}_c, \mathbf{I}), p = 2048$ & 51.04 \\
\midrule
Distribution Type
& Log-normal: $\log \mathcal{N}(\bm{\mu}_c, \mathbf{I}), p = 1024$ & 48.31 \\
& Laplace: $\mathcal{L}((\bm{\mu}_c)_d, 1/\sqrt{2}), p = 1024$ & 49.99 \\
\bottomrule
\end{tabular}
}
 \label{tab:distribution_type}%
\end{table}

\textbf{Q11. Is NAF still effective when the target domain exhibits class imbalance?} We conduct an experiment on CIFAR-10 using ResNet-18 with a long-tailed setup. In this configuration, the labeled and unlabeled sets have per-class sample counts of [50, 30, 20, 10, 6, 4, 3, 2, 2, 1] and [1000, 600, 200, 100, 60, 40, 20, 10, 6, 4], respectively. NAF achieves an accuracy of 56.38\% and a macro F1-score of 53.22\%, outperforming ERM, which attains 51.19\% accuracy and a macro F1-score of 45.73\%. The results suggest that NAF remains effective even under such a class imbalance scenario.

\textbf{Q12. Is there another method to learn the noise domain in the representation space?}
In the above experiments, we use a noise projector $g_n$ to learn an optimal noise domain in the domain-shared representation space. As an alternative, we explore constructing an optimal noise domain by learning its mean $\bm{\mu}$ and standard deviation $\bm{\sigma}$, and apply the reparameterization trick \citep{kingma2014auto} to map samples from a standard normal distribution to a Gaussian distribution $\mathcal{N}(\bm{\mu}, \mathrm{diag}(\bm{\sigma}^2))$ in the representation space. We evaluate this method on CIFAR-10 using ResNet-18, achieving an accuracy of 70.60\%, which is comparable to the performance of NAF of 71.83\%, and exceeds ERM by 12.45\%. 
Those results suggest that learning a parametric noise distribution and sampling from it via the reparameterization trick can also serve as a viable strategy.

\textbf{Q13. How does the performance of using noise as a source domain compare to that of using real samples?} We investigate this question on the Office-Caltech-10 dataset, which is a transfer learning benchmark containing 10 shared object classes from Office-31 \citep{saenko2010adapting} and Caltech-256 \citep{griffin2007caltech}.
The Caltech domain is used as the target domain, where 4 labeled samples per class are randomly selected and the rest are treated as unlabeled. For the source domain, we consider two settings: a synthetic noise domain (denoted as NAF (Noise)) and the Amazon domain (denoted as NAF (Real)). For each source domain, we vary the number of labeled samples per class among 10, 20, 30, 40, and 50. \cref{tab:acc_real_noise} reports the results, from which we make the following observations.
\textbf{(1)} Both NAF (Noise) and NAF (Real) outperform ERM, and NAF (Real) performs slightly better, indicating that even synthetic noise can effectively guide the learning of the target samples without access to real samples.
\textbf{(2)} Even a limited number of source samples significantly improves performance, as they can form a class-discriminative structure that achieves positive transfer regardless of whether the samples are real or synthetic. 
Those findings together support the conclusion that synthetic noise can serve as a practical substitute when real out-of-domain samples are unavailable.

\begin{table}[htbp]
  \centering
  \vspace{-1ex}
  \caption{Accuracy (\%) comparison on Amazon-to-Caltech-10 transfer task using ResNet-18 with different number of source samples.}
  \resizebox{0.48\textwidth}{!}{
    \begin{tabular}{c|ccccc}
    \toprule
    \# source samples per class & 10    & 20    & 30    & 40    & 50 \\
    \midrule
    ERM   & 83.51 & 83.51 & 83.51 & 83.51 & 83.51 \\
    NAF (Noise)  & 89.89  & 88.65  & 88.83  & 88.12  & 89.36  \\
    NAF (Real)  & 90.25  & 90.07  & 90.96  & 92.20  & 91.14  \\
    \bottomrule
    \end{tabular}%
    }
  \label{tab:acc_real_noise}%
  \vspace{-1ex}
\end{table}%

\textbf{Q14. What is the impact of using class means for noise construction on model performance?} Using class means as the noise domain represents a special case of noise construction, where all noise within a class collapses to a single class mean. To investigate its effect, we consider two variants: NAF with Fixed Class Means, \textit{i.e.}, NAF (FCM), and NAF with Learned Class Means, \textit{i.e.}, NAF (LCM). 
In NAF (FCM), class means in the noise domain are initialized as orthogonal vectors and remain fixed during training. In NAF (LCM), class means are similarly initialized but updated during training through the noise projector. \cref{tab:different_noise} reports the results on CIFAR-100 with 4 labeled samples using ResNet-18. 
We have several insightful observations. 
\textbf{(1)} Both NAF (FCM) and NAF (LCM) outperform ERM, indicating that positive transfer can still occur even when the noise domain is simplified to class means. The reason is that class means retain the separability among categories, thereby preserving a discriminative structure that provides useful guidance for aligning target representations.
\textbf{(2)} NAF (LCM) achieves an accuracy of 47.72\%, outperforming NAF (FCM) (46.68\%) by 1.04\%, demonstrating that using learnable noise may be more effective than using fixed noise. 
\textbf{(3)} NAF achieves 49.98\% accuracy, surpassing NAF (LCM), suggesting that different noise construction strategies lead to varying levels of discriminative structure, which may influence alignment and overall performance.

\begin{table}[htbp]
    \centering 
     \setlength{\tabcolsep}{0.4cm}
    \captionof{table}{Accuracy (\%) comparison of different noise construction strategies on CIFAR-100 using ResNet-18.}
    \resizebox{0.9\columnwidth}{!}{
    \begin{tabular}{cccc}
    \toprule
    ERM  & NAF (FCM) & NAF (LCM) & NAF \\
    \midrule
    42.24 & 46.68 & 47.72 & 49.98 \\
    \bottomrule
    \end{tabular}%
    }
    \label{tab:different_noise}%
    \vspace{-1ex}
\end{table}

We conduct additional analyses in \Cref{appendix:Additional_Exp}:
\textbf{(1)} the influence of the amount of noise;
\textbf{(2)} the analysis of hyperparameter sensitivity;
\textbf{(3)} the effect of inter-class distances within the noise domain;
\textbf{(4)} NAF vs. plug-in SSL modules; and
\textbf{(5)} NAF vs. contrastive learning methods.
Those analyses provide a deeper understanding of the underlying principles of NAF and further validate its effectiveness.

\section{Conclusion}
\label{sec:conclusion}

In this paper, we formulate the SSNA problem, which leverages a synthetic noise domain to facilitate the learning task in the target domain. To address this problem, we first derive a generalization bound for the target domain that offers a theoretical understanding of how incorporating a noise domain can influence generalization performance.
Building on this bound, we propose the NAF, which jointly minimizes the empirical risks on both the noise and target domains while reducing their distributional divergence within a domain-shared representation space. Extensive experiments demonstrate that NAF effectively tightens the generalization bound of the target domain, resulting in improved performance. 
A promising direction for future work is to extend SSNA to broader real-world scenarios.


\section*{Acknowledgements}

This work was supported by National Natural Science Foundation of China under Grant no. 62136005 and Shenzhen fundamental research program JCYJ20250604144724032.

\section*{Impact Statement}

This paper presents work whose goal is to advance the field of Transfer Learning. There are many potential societal consequences of our work, none which we feel must be specifically highlighted here.

\nocite{langley00}

\bibliography{SSNA}
\bibliographystyle{icml2026}

\newpage
\appendix
\onecolumn


The appendices provide additional details and results, covering the following contents.

\begin{itemize}

\item \Cref{appendix:relatedWork}: Related Work.

\item \Cref{appendix:distribution_alignment_mechanisms}: Mathematical Details of Distribution Alignment Mechanisms.

\item \Cref{appendix:AdditionalExperimentalSettings}: Additional Experimental Settings.

\item \Cref{appendix:notation_proof}: Notation Table and Proof of \cref{theo:SSNA}.

\item \Cref{appendix:SuppExp}: Supplementary Experimental Results.

\item \Cref{appendix:Additional_Exp}: Additional Analysis Experiments.

\item \Cref{{appendix_sec:limitations}}: Limitations.

\end{itemize}

\section{Related Work}
\label{appendix:relatedWork}

This section reviews related work on transfer learning (TL) and semi-supervised learning (SSL).

\textbf{TL enhances generalization by leveraging abundant labeled source samples to guide the learning of unlabeled target samples}. \citet{ben2006analysis,ben2010theory} introduce the theoretical foundations for TL by establishing a generalization bound for the target domain.
Based on this theoretical bound, a key objective in TL is to minimize the distributional discrepancy between the source and target domains.
To this end, various distribution alignment methods have been proposed, primarily leveraging Maximum Mean Discrepancy (MMD) \citep{gretton2006kernel} and Adversarial Domain Alignment (ADA) \citep{ganin2016domain}.
For instance, several studies \citep{long2013adaptation,long2015learning,long2019transferable,Yao2019Heterogeneous,yao2020discriminative,cheng2024deep} propose MMD variants to quantify the distributional divergence between the source and target domains.
Another line of research \citep{ganin2016domain,long2018conditional,liu2021adversarial,gao2021gradient,shi2023adversarial,meegahapola2024m3bat,xu2025concept} explores diverse forms of ADA, which mitigate this divergence via a min-max game between a feature extractor and a domain discriminator.
Furthermore, several studies \citep{gu2022keypoint,bai2024prompt,liu2024rethinking,ren2024towards} utilize alternative distributional alignment mechanisms to facilitate cross-domain knowledge transfer, while recent works such as \citet{diniz2026pas} further study transferability estimation before domain adaptation. In the above studies, the source domain consists of semantically meaningful samples, such as images or text, whereas our work employs a domain-agnostic noise domain as the source domain.

\textbf{SSL utilizes a few labeled target samples to guide the learning of unlabeled target samples}.
Many methods \citep{xie2020unsupervised,sohn2020fixmatch,zhang2021flexmatch,chen2022debiased,wang2022debiased} utilize data augmentation and pseudo-label refinement mechanisms, where the former improves sample diversity and the latter mitigates pseudo-label bias.
For instance, UDA \citep{xie2020unsupervised} strengthens consistency training by replacing simple noise injection with strong data augmentation. FixMatch \citep{sohn2020fixmatch} generates pseudo-labels from weakly augmented samples and enforces consistency with their strongly augmented counterparts. FlexMatch \citep{zhang2021flexmatch} further refines this framework by dynamically adjusting class-specific confidence thresholds. Building upon FixMatch, SA-FixMatch \citep{li2025towards} replaces random strong augmentation with semantic-aware masking strategies to improve semantic representation learning.
To improve pseudo-label quality, DST \citep{chen2022debiased} decouples pseudo-label generation and utilization with two independent classifiers while adversarially optimizing the representation extractor. DebiasMatch \citep{wang2022debiased} uses causal inference to adjust decision margins based on pseudo-label imbalance. Recent work \citep{pei2025non} further explores two-phase prediction dynamics and shows that non-stationary pseudo-labels may provide richer learning signals. Another line of research \citep{grandvalet2004semi,cui2020towards,zhang2024rethinking} focuses on directly guiding the learning of unlabeled samples. An example is LERM \citep{zhang2024rethinking}, which utilizes class-specific label-encodings to guide the learning of unlabeled samples.

\section{Mathematical Details of Distribution Alignment Mechanisms}
\label{appendix:distribution_alignment_mechanisms}

NAF is a general framework that supports various instantiations of the loss term $\mathcal{L}_{n, t}$. In this paper, we consider five distinct instantiations: \textbf{(1)} Negative Domain Similarity (NDS); \textbf{(2)} Negative Sample Similarity (NSS); \textbf{(3)} Negative Contrastive Domain Similarity (NCDS); \textbf{(4)} Negative Contrastive Sample Similarity (NCSS); and \textbf{(5)} Euclidean Domain Distance (EDD). Their specific formulations are defined below.

\textbf{(1)} NDS computes the cosine similarities between their global means and class-wise means, averages those similarities, and then negates the result, defined by
\begin{equation} \label{L_d}
\mathcal{L}_{n,t}^{\text{NDS}} = - \frac{1}{C+1} \sum_{c=0}^C \langle \widetilde{\mathbf{m}}_n^c, \widetilde{\mathbf{m}}_t^c \rangle,
\end{equation}
where $C$ is the number of classes, and $\langle \cdot, \cdot \rangle$ denotes the inner product. The case $c = 0$ corresponds to the global mean calculated across all classes. 
$\widetilde{\mathbf{m}}_n^c$ and $\widetilde{\mathbf{m}}_t^c$ denote the $l_2$-normalized class-wise means of the noise and target domains for class $c$, respectively. $\widetilde{\mathbf{m}}_t^c$ is calculated using both labeled and unlabeled target samples, with class assignments for unlabeled samples inferred via hard pseudo-labels predicted by the classifier and iteratively updated during training. \textit{This pseudo-labeling strategy is consistently applied across all mechanisms}.

\textbf{(2)} NSS calculates the negative average cosine similarities between all noise-target pairs from the same class:
\begin{equation} \label{L_nss}
\mathcal{L}_{n,t}^{\text{NSS}} = - \frac{1}{\sum_{c=1}^C n_c n_{t, c}} \sum_{c=1}^C \sum_{i=1}^{n_c} \sum_{j=1}^{n_{t, c}} \langle \widetilde{\mathbf{n}}_{i, c}, \widetilde{\mathbf{x}}_{j, c}^t \rangle,
\end{equation}
where $n_c$ and $n_{t,c}$ denote the numbers of noise and target samples in class $c$, and $\widetilde{\mathbf{n}}_{i,c}$ and $\widetilde{\mathbf{x}}_{j,c}^t$ are the $l_2$-normalized representations
of the $i$-th noise and $j$-th target samples in class $c$.

\textbf{(3)} NCDS computes a contrastive loss over class-wise means across the noise and target domains, which is formulated as
\begin{equation}
\mathcal{L}_{n,t}^{\text{NCDS}} = -\frac{1}{2C} \sum_{c=1}^{C} \left[ 
\ln \frac{
    \exp\left( \left\langle \widetilde{\mathbf{m}}_{n}^{c}, \widetilde{\mathbf{m}}_{t}^{c} \right\rangle \right)
}{
    \sum_{c'=1}^{C} \exp\left( \left\langle \widetilde{\mathbf{m}}_{n}^{c}, \widetilde{\mathbf{m}}_{t}^{c'} \right\rangle \right)
}
+ 
\ln \frac{
    \exp\left( \left\langle \widetilde{\mathbf{m}}_{t}^{c}, \widetilde{\mathbf{m}}_{n}^{c} \right\rangle \right)
}{
    \sum_{c'=1}^{C} \exp\left( \left\langle \widetilde{\mathbf{m}}_{t}^{c}, \widetilde{\mathbf{m}}_{n}^{c'} \right\rangle \right)
}
\right].
\end{equation}

\textbf{(4)} NCSS defines a regression loss that aligns the cosine similarity of each noise–target pair to a target value: $+1$ for same-class pairs and $-1$ for different-class pairs:
\begin{equation} \label{L_ncss}
\mathcal{L}_{n,t}^{\text{NCSS}} = \frac{1}{C} \Big[ 
\frac{1}{n_t n} \sum_{j=1}^{n_t} \sum_{i=1}^{n} 
\big( \langle \widetilde{\mathbf{x}}_{j}^t, \widetilde{\mathbf{n}}_{i} \rangle - y_{j,i} \big)^{2}
\Big],
\end{equation}
where $n$ and $n_{t}$ denote the numbers of noise and target samples, respectively. 
$\widetilde{\mathbf{n}}_{i}$ and $\widetilde{\mathbf{x}}_{j}^t$ represent the $l_2$-normalized representations of the $i$-th noise and $j$-th target samples. $y_{i,j}$ is set to $1$ if the two samples share the same class, and $-1$ otherwise.

\textbf{(5)} EDD computes the average Euclidean distance between the global and class-wise means of the noise and target domains, defined as
\begin{equation} \label{L_d}
\mathcal{L}_{n,t}^{\text{EDD}} = \frac{1}{C+1} \sum_{c=0}^C \| \mathbf{m}_n^c - \mathbf{m}_t^c \|_2.
\end{equation}

\section{Additional Experimental Settings}
\label{appendix:AdditionalExperimentalSettings}

\subsection{Dataset Details}
\label{appendix:Dataset}

In the experiments, we adopt the following datasets:
\begin{itemize}[itemsep=0pt, topsep=0pt]
    \item \textbf{CIFAR-10} \citep{krizhevsky2009learning}: 60,000 natural images across 10 classes, with 50,000 training images and 10,000 test images.
    \item \textbf{CIFAR-100} \citep{krizhevsky2009learning}: 60,000 natural images from 100 classes, split into 50,000 training and 10,000 test images.
    \item \textbf{DTD-47} \citep{cimpoi2014describing}: 5,640 texture images from 47 classes, used for texture classification tasks.
    \item \textbf{Caltech-101} \citep{fei2004learning}: 9,146 images from 101 object classes plus a background class, with varying numbers of images per class.
    \item \textbf{CUB-200} \citep{wah2011caltech}: 11,788 bird images from 200 species, with standard splits for training and testing.
    \item \textbf{Oxford Flowers-102} \citep{nilsback2008automated}: 8,189 images from 102 flower classes, with 6,149 training images, 1,020 validation images, and 1,020 test images.
    \item \textbf{Stanford Cars-196} \citep{krause2013object}: 16,185 car images from 196 models, split into 8,144 training images and 8,041 test images.
    \item \textbf{ImageNet-1K} \citep{deng2009imagenet}: 1.28 million training images and 50,000 validation images across 1,000 classes, following standard splits for large-scale image classification.
    \item \textbf{AG News-4} \citep{zhang2015characterlevel}: a text classification dataset containing 120,000 training and 7,600 test samples across 4 news classes.
\end{itemize}

\subsection{Implementation Details}
\label{appendix:Implementation}

We implement the proposed NAF using the TLlib library \citep{jiang2022transferability} and conduct all experiments on NVIDIA V100 series GPUs. In the target domain, we apply weak and strong augmentation techniques \citep{cubuk2020randaugment}. For image classification, the representation extractor $g_t$ is implemented using ResNet \citep{he2016deep} backbones pre-trained on ImageNet-1K for all datasets, except for ImageNet-1K itself, where the backbone is trained from scratch. We use mini-batch SGD with momentum 0.9, setting batch sizes to 32 for CIFAR-10, CIFAR-100, DTD-47, Caltech-101, CUB-200, Oxford Flowers-102, and Stanford Cars-196, and 128 for ImageNet-1K. For text classification, we employ the pre-trained BERT model \citep{devlin2019bert} as the text encoder. The model is trained using Adam with a batch size of 16 and a learning rate of 2e-5. The noise projector $g_n$ is a non-linear layer with ReLU activation \citep{nair2010rectified}, and the classifier $f$ is a single linear layer.


In addition, since class-wise mean estimation is required in NAF, we follow \citep{xie2018learning} and employ an \textit{exponential moving average} to address the mini-batch issue and update the class means as follows:
$\mathbf{m}_{n}^c = (1 - \lambda) \cdot \mathbf{m}_{o}^c + \lambda \cdot \mathbf{m}_{b}^c$, where $\mathbf{m}_o^c$ and $\mathbf{m}_n^c$ denote the previous and updated $c$-th class means, respectively, and $\mathbf{m}_{b}^c$ is the $c$-th class mean calculated from the current mini-batch.

\begin{table}[htbp]
  \centering
  \vspace{-1ex}
  \caption{Detailed parameter configurations used in this paper.}
  \setlength{\tabcolsep}{{0.02cm}}
    \begin{tabular}{cccccccc}
    \toprule
    Method & Dataset & Backbone & $\alpha$ & $\beta$  & $\lambda$ & learning rate & iterations \\
    \midrule
    \multirow{4}[2]{*}{NAF} & CIFAR-10 / DTD-47 & ResNet-50 / ResNet-18 & 1     & 1     & \multirow{7}[2]{*}{0.7} & 0.03  & \multirow{3}[1]{*}{10,000} \\
          & CIFAR-100 & ResNet-50 / ResNet-18 & 10    & 10    &       & 0.01  &  \\
          & Caltech-101 & ResNet-50 / ResNet-18 & 1     & 10    &       & 0.003 &  \\
          & CUB-200 & ResNet-18 & 1  & 50 &  & 0.003  & 8,000 \\
          & Oxford Flowers-102 / Stanford Cars-196 & ResNet-18 & 1  & 50 &  & 0.03  & 4,000 / 6,000 \\
          & ImageNet-1K & ResNet-18 & 0.1     & 10    &       & 0.01  & 80,000 \\
          & AG News-4 & BERT & 2 & 10  &  1  & 2e-5  & 150 \\
    \midrule
    LERM + NAF & CIFAR-10 & \multirow{4}[2]{*}{ResNet-18} & 1     & 1     & 0.99  & 0.03  & \multirow{4}[2]{*}{10,000} \\
    \multirow{3}[1]{*}{Others + NA} & CIFAR-100 &       & 10    & 10    & 0.7 & 0.01 &  \\
          & DTD-47   &       & 1     & 5     & 0.7   & 0.03  &  \\
          & Caltech-101 &       & 1     & 10    & 0.7   & 0.003 &  \\
    \bottomrule
    \end{tabular}%
  \label{tab:parameterConfiguration}%
  \vspace{-1ex}
\end{table}%

Furthermore, Table~\ref{tab:parameterConfiguration} summarizes the detailed parameter configurations used in this paper. Since $\alpha$ and $\beta$ serve as the core hyperparameters of NAF, we provide several empirical observations. \textbf{(1)} Moderate values (\textit{e.g.}, $\alpha,\beta \in [1,10]$) generally achieve stable performance across most experimental settings. \textbf{(2)} $\alpha=1$ and $\beta=10$ frequently appear among well-performing settings in Table~\ref{tab:parameterConfiguration}, and may serve as reasonable starting points for hyperparameter selection, from which limited tuning may be performed depending on dataset characteristics. \textbf{(3)} Smaller $\beta$ values are often observed to work well for datasets with fewer classes (\textit{e.g.}, CIFAR-10), while moderately larger values may be beneficial for more complex datasets (\textit{e.g.}, CIFAR-100).

\section{Notation Table and Proof of \cref{theo:SSNA}}
\label{appendix:notation_proof}

\subsection{Notation}
\label{appendix:notation}

For clarity, \cref{appendix_tab:notation} summarizes the notation used in this paper.

\begin{table}[htbp]
\centering
\caption{A summary of the notation used in this paper.}
\begin{tabular}{ll}
\toprule
Notation & Description \\
\midrule
$C$ & Total number of classes \\
$\mathcal{C}$ & Class index set $\{0, \dots, C-1\}$ \\
$\mathcal{D}_l$, $\mathcal{D}_u$ & Labeled and unlabeled target sample sets \\
$\mathcal{D}_e$ & Test target sample set (used only for evaluation) \\
$\mathcal{D}_t$ & Target domain: $\mathcal{D}_l \cup \mathcal{D}_u \cup \mathcal{D}_e$ \\
$\mathcal{D}_n$ & Noise domain \\
$\mathbf{x}_i^l, \mathbf{x}_i^u, \mathbf{x}_i^e$ & $i$-th sample from $\mathcal{D}_l$, $\mathcal{D}_u$, and $\mathcal{D}_e$, respectively \\
$y_i^l$ & Label of $\mathbf{x}_i^l$, $y_i^l \in \mathcal{C}$ \\
$\mathbf{n}_i$ & $i$-th noise in $\mathcal{D}_n$ \\
$y_i$ & Label of $\mathbf{n}_i$, $y_i \in \mathcal{C}$ \\
$\mathcal{X}$ & Sample space (\textit{e.g.}, a pixel-level image space) \\
$\mathcal{E}$ & Noise space (\textit{e.g.}, a $p$-dimensional space) \\
$\mathcal{Z}$ & Domain-shared representation space \\
$\mathcal{F}$ & Hypothesis space over $\mathcal{Z}$ \\
$\widetilde{\mathcal{P}}_t$ & Target distribution over $\mathcal{Z}$ \\
$\widetilde{\mathcal{P}}_n$ & Noise distribution over $\mathcal{Z}$ \\
$\mathbb{U}_n$, $\mathbb{U}_t$ & Unlabeled sample sets drawn from $\widetilde{\mathcal{P}}_n$ and $\widetilde{\mathcal{P}}_t$, respectively \\
$\mathbb{L}_n$, $\mathbb{L}_t$ & Labeled sample sets drawn from $\widetilde{\mathcal{P}}_n$ and $\widetilde{\mathcal{P}}_t$, respectively \\
$g_t (\cdot)$ & Representation extractor for target samples \\
$g_n (\cdot)$ & Noise projector for noise \\
$f (\cdot)$ & Domain-shared classifier \\
$n_l$, $n_u$ & Number of labeled and unlabeled target samples \\
$n$ & Number of noise \\
\bottomrule
\end{tabular}
\label{appendix_tab:notation}
\end{table}

\subsection{Proof of \cref{theo:SSNA}}
\label{appendix:proof}

\begingroup
\setcounter{theorem}{3}
\renewcommand{\thetheorem}{4.1}

\begin{theorem}[Generalization Bound of SSNA]
Let $\hat{f} = \arg\min_{f \in \mathcal{F}} \hat{\epsilon}_{\alpha} (f)$ be the empirical minimizer of $\hat{\epsilon}_\alpha (f)$, and let $f_t^* = \arg\min_{f \in \mathcal{F}} \epsilon_t (f)$ be the target error minimizer. Then, for any $\delta \in (0, 1)$, with probability at least $1 - \delta$ (over the choice of the samples), we have:
\begin{equation}
\begin{split}
\small
\epsilon_t (\hat{f}) 
&\leq 
\epsilon_t (f_t^*) + \mathcal{O}\!\left( 
\gamma
\sqrt{\frac{d \log m + \log(\frac{1}{\delta})}{m}} \right) 
 + 2(1 - \alpha) \Bigg[ \frac{1}{2} \hat{d}_{\mathcal{H}\Delta\mathcal{H}}(\mathbb{U}_n, \mathbb{U}_t) + \mathcal{O} \left( \sqrt{\frac{d \log m' + \log(\frac{1}{\delta})}{m'} } \right) 
\nonumber \\ 
&\quad + \hat{\epsilon}_n (\hat{f}) + \hat{\epsilon}_t (\hat{f}) + \mathcal{O} \left( \sqrt{ \frac{d \log(\frac{(1-\beta)m}{d}) + \log(\frac{1}{\delta})}{(1-\beta)m} } \right) + \mathcal{O}\!\left( \sqrt{\frac{d \log(\frac{\beta m}{d}) + \log(\frac{1}{\delta})}{\beta m}} \right) \Bigg],
\end{split}
\end{equation}
where $\gamma = \sqrt{\frac{\alpha^2}{\beta} + \frac{(1 - \alpha)^2}{1 - \beta}}$, and $\hat{d}_{\mathcal{H} \Delta \mathcal{H}} (\mathbb{U}_n, \mathbb{U}_t)$ is the empirical $\mathcal{H}$-divergence estimated from noise and target samples in $\mathcal{Z}$.
\end{theorem}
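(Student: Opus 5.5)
The plan follows the proof of Theorem 3 in \citep{ben2010theory}. The one modification, borrowed from \citep{li2021learning}, is to replace the joint optimal error $\lambda$ by empirical errors of $\hat{f}$ itself. Throughout, work in $\mathcal{Z}$ with $\epsilon_\alpha(f)=\alpha\epsilon_t(f)+(1-\alpha)\epsilon_n(f)$. Split the confidence budget $\delta$ into a constant number of pieces, one per concentration event, and absorb the constants into the $\mathcal{O}(\cdot)$ terms.

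\textbf{Step 1 (uniform convergence of the combined error).} Invoke Lemma 4 of \citep{ben2010theory}: $\hat{\epsilon}_\alpha(f)$ is a weighted average of $m$ independent bounded losses, with weights $\alpha/\beta$ on the $\beta m$ target points and $(1-\alpha)/(1-\beta)$ on the $(1-\beta)m$ noise points. Hoeffding's inequality then gives variance proxy $\gamma^2/m$. A VC/growth-function argument upgrades this to
\[
|\hat{\epsilon}_\alpha(f)-\epsilon_\alpha(f)|\le \mathcal{O}\Bigl(\gamma\sqrt{\tfrac{d\log m+\log(1/\delta)}{m}}\Bigr)
\quad\text{uniformly over } f\in\mathcal{F}.
\]
Since $\hat{f}$ minimizes $\hat{\epsilon}_\alpha$, we get $\epsilon_\alpha(\hat f)\le \epsilon_\alpha(f_t^*)+2\cdot\mathcal{O}(\gamma\cdots)$.

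\textbf{Step 2 (relating $\epsilon_t$ and $\epsilon_\alpha$ for a fixed $f$).} Write $|\epsilon_\alpha(f)-\epsilon_t(f)|=(1-\alpha)|\epsilon_n(f)-\epsilon_t(f)|$. Do not apply the usual $\tfrac12 d_{\mathcal{H}\Delta\mathcal{H}}+\lambda$ bound here. Instead, use the triangle inequality for the disagreement loss, $\epsilon_n(f)\le \epsilon_n(f,f')+\epsilon_n(f')$, at the specific hypothesis $\hat f$, in the spirit of \citep{li2021learning}. This yields
\[
|\epsilon_n(f)-\epsilon_t(f)|\le \tfrac12 d_{\mathcal{H}\Delta\mathcal{H}}(\widetilde{\mathcal{P}}_n,\widetilde{\mathcal{P}}_t)+\epsilon_n(\hat f)+\epsilon_t(\hat f).
\]
The point is that comparing $f$ to $\hat f$ in each domain costs $|\epsilon_n(f,\hat f)-\epsilon_t(f,\hat f)|\le \tfrac12 d_{\mathcal{H}\Delta\mathcal{H}}$. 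The remaining label-mismatch terms are controlled by $\epsilon_n(\hat f)+\epsilon_t(\hat f)$, which replaces $\lambda=\min_h(\epsilon_n(h)+\epsilon_t(h))$. Apply this at both $f=\hat f$ and $f=f_t^*$. Chaining
\[
\epsilon_t(\hat f)\le \epsilon_\alpha(\hat f)+(1-\alpha)[\cdots]\le \epsilon_\alpha(f_t^*)+\dots\le \epsilon_t(f_t^*)+(1-\alpha)[\cdots]+\dots
\]
produces the prefactor $2(1-\alpha)$ multiplying $\tfrac12 d_{\mathcal{H}\Delta\mathcal{H}}+\epsilon_n(\hat f)+\epsilon_t(\hat f)$.

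\textbf{Step 3 (replacing population quantities by empirical ones).} First, by Lemma 1 of \citep{ben2010theory}, $d_{\mathcal{H}\Delta\mathcal{H}}(\widetilde{\mathcal{P}}_n,\widetilde{\mathcal{P}}_t)\le \hat d_{\mathcal{H}\Delta\mathcal{H}}(\mathbb{U}_n,\mathbb{U}_t)+\mathcal{O}(\sqrt{(d\log m'+\log(1/\delta))/m'})$, using that the VC dimension of $\mathcal{H}\Delta\mathcal{H}$ is $\mathcal{O}(d)$. Second, apply standard VC bounds uniformly over $\mathcal{F}$ on $\mathbb{L}_n$, of size $(1-\beta)m$, and on $\mathbb{L}_t$, of size $\beta m$. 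These give $\epsilon_n(\hat f)\le\hat\epsilon_n(\hat f)+\mathcal{O}(\sqrt{(d\log((1-\beta)m/d)+\log(1/\delta))/((1-\beta)m)})$ and the analogous target bound. Uniformity is required because $\hat f$ depends on the samples. A union bound over all events and substitution into Step 2 yields the stated inequality.

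\textbf{Main obstacle.} The delicate step is Step 2, namely justifying the $\lambda$-free inequality with $\hat f$ as the reference hypothesis. The triangle-inequality bookkeeping must produce exactly $\epsilon_n(\hat f)+\epsilon_t(\hat f)$ without extra factors, and the fact that $\hat f$ is data-dependent must be handled; the uniform bounds of Step 3 are what handle it. I would first verify the pointwise inequality carefully for $f=f_t^*$. Any leftover constants would be absorbed, at worst, into the $2(1-\alpha)$ prefactor and the $\mathcal{O}$ terms.
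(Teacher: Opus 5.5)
Your proposal is correct and is essentially the paper's proof. The paper applies Lemma~4 of Ben-David et al.\ (with $\lambda$) at both $\hat f$ and $f_t^*$, and only afterwards bounds $\lambda \le \epsilon_n(\hat f)+\epsilon_t(\hat f)$, using $\hat f\in\mathcal{F}$. That is exactly what your Step~2 achieves by running the triangle-inequality argument with $\hat f$ as the reference hypothesis, and your Steps~1 and~3 (uniform convergence with factor $\gamma$, the empirical $\mathcal{H}\Delta\mathcal{H}$-divergence bound, and uniform VC bounds on $\mathbb{L}_n$, $\mathbb{L}_t$) match the remaining steps of the paper.

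Your ``main obstacle'' is milder than you fear. The Step~2 inequality is deterministic for any reference hypothesis in $\mathcal{F}$, so the data dependence of $\hat f$ only matters in Step~3, where the uniform bounds handle it. One small citation slip: the Hoeffding-type concentration result is Lemma~5 of Ben-David et al., not Lemma~4.
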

\endgroup

We now outline the main steps of the proof, based on \citep{ben2010theory}, beginning with Lemmas~\ref{Lemma1} and \ref{Lemma2}, which correspond to Lemmas 4 and 5 in \citep{ben2010theory}.

\begin{lemma}
\label{Lemma1}
Let $f$ be a hypothesis in hypothesis space $\mathcal{F}$. Then $|\epsilon_\alpha(f) - \epsilon_t(f)| \le (1-\alpha)\left(\frac{1}{2} d_{\mathcal{H}\Delta\mathcal{H}}(\widetilde{\mathcal{P}}_n, \widetilde{\mathcal{P}}_t) + \lambda \right),$
where $\lambda := \min_{f \in \mathcal{F}} \epsilon_n (f) + \epsilon_t (f)$.
\end{lemma}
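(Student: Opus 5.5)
The plan is to follow Ben-David et al.'s proof of their Lemma 4, working entirely in the shared space $\mathcal{Z}$ with $\widetilde{\mathcal{P}}_n$ as the source distribution. By definition $\epsilon_\alpha(f)=\alpha\epsilon_t(f)+(1-\alpha)\epsilon_n(f)$, so
\[
|\epsilon_\alpha(f)-\epsilon_t(f)| = (1-\alpha)\,|\epsilon_n(f)-\epsilon_t(f)|.
\]
It therefore suffices to show that for every $f\in\mathcal{F}$,
\[
|\epsilon_n(f)-\epsilon_t(f)|\le \tfrac12 d_{\mathcal{H}\Delta\mathcal{H}}(\widetilde{\mathcal{P}}_n,\widetilde{\mathcal{P}}_t)+\lambda .
\]

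To prove this, let $f^*$ attain $\lambda=\min_{f}\epsilon_n(f)+\epsilon_t(f)$; if the minimum is not attained, take an infimizing sequence and pass to the limit at the end. Write $\epsilon_D(f,f')=\Pr_{z\sim D}[f(z)\ne f'(z)]$ for the disagreement of two hypotheses under $D$. The $0$-$1$ disagreement satisfies the triangle inequality, which gives
\[
\epsilon_t(f)\le \epsilon_t(f^*)+\epsilon_{t}(f,f^*),\qquad \epsilon_n(f,f^*)\le \epsilon_n(f)+\epsilon_n(f^*).
\]
Inserting the difference of the two disagreements then yields
\[
\epsilon_t(f)\le \epsilon_t(f^*)+\epsilon_n(f,f^*)+|\epsilon_t(f,f^*)-\epsilon_n(f,f^*)|\le \epsilon_n(f)+\lambda+|\epsilon_t(f,f^*)-\epsilon_n(f,f^*)|.
\]
The symmetric chain, with the roles of $t$ and $n$ swapped, bounds $\epsilon_n(f)-\epsilon_t(f)$ in the same way.

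The remaining step is to control $|\epsilon_t(f,f^*)-\epsilon_n(f,f^*)|$. The indicator $\mathbb{1}[f\ne f^*]$ is a member of $\mathcal{H}\Delta\mathcal{H}$. By the definition
\[
d_{\mathcal{H}\Delta\mathcal{H}}(D,D')=2\sup_{h\in\mathcal{H}\Delta\mathcal{H}}|\Pr_D[h=1]-\Pr_{D'}[h=1]|,
\]
this difference is therefore at most $\frac12 d_{\mathcal{H}\Delta\mathcal{H}}(\widetilde{\mathcal{P}}_n,\widetilde{\mathcal{P}}_t)$. Combining this with the previous display and multiplying by $(1-\alpha)$ gives the lemma.

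I expect no real obstacle here. The one point needing care is that the lemma is stated for distributions over $\mathcal{Z}$: both $\widetilde{\mathcal{P}}_n$ and $\widetilde{\mathcal{P}}_t$ must be pushforwards into the same space, under the fixed maps $g_n$ and $g_t$, so that $\mathcal{F}$ and $\mathcal{H}\Delta\mathcal{H}$ act on a common domain. This is precisely what the shared representation space provides. Apart from that, the argument is verbatim that of Ben-David et al.
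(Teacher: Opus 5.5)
Your proof is correct and follows the argument the paper relies on. The paper omits the proof and defers to Lemma 4 of Ben-David et al., which likewise reduces to $(1-\alpha)|\epsilon_n(f)-\epsilon_t(f)|$, inserts the joint minimizer $f^*$ via the triangle inequality for $0$-$1$ disagreement, and bounds $|\epsilon_t(f,f^*)-\epsilon_n(f,f^*)|$ by $\frac{1}{2}d_{\mathcal{H}\Delta\mathcal{H}}$ (their Lemma 3); your one-sided-then-symmetric chain, the infimizing-sequence remark, and the implicit identification of $\mathcal{H}$ with $\mathcal{F}$ are harmless variations consistent with the paper's conventions.
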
 

\begin{lemma}
\label{Lemma2}
For a fixed hypothesis $f$, if $m$ random labeled samples are drawn, with $\beta m$ from $\widetilde{\mathcal{P}}_t$ and $(1 - \beta) m$ from $\widetilde{\mathcal{P}}_n$, then for any $\delta \in (0, 1)$, with probability at least $1 - \delta$ (over the choice of samples), we have:
\begin{equation}
\Pr\left[ \left| \hat{\epsilon}_\alpha(f) - \epsilon_\alpha(f) \right| \geq \epsilon \right] \leq 2 \exp\left( \frac{-2m\epsilon^2}{\frac{\alpha^2}{\beta} + \frac{(1-\alpha)^2}{1 - \beta}} \right).
\end{equation}
\end{lemma}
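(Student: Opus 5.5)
The plan is to prove Lemma~\ref{Lemma2} directly with Hoeffding's inequality, following the proof of Lemma 5 in \citep{ben2010theory}. Fix $f \in \mathcal{F}$. Write $\mathbb{L}_t = \{(\mathbf{z}_i^t, y_i^t)\}_{i=1}^{\beta m}$ and $\mathbb{L}_n = \{(\mathbf{z}_j^n, y_j^n)\}_{j=1}^{(1-\beta) m}$; we assume $\beta m$ is an integer. The $0$-$1$ losses $\ell_i^t = |f(\mathbf{z}_i^t) - y_i^t|$ and $\ell_j^n = |f(\mathbf{z}_j^n) - y_j^n|$ take values in $\{0,1\}$, so
\begin{equation*}
\hat{\epsilon}_\alpha(f) = \sum_{i=1}^{\beta m} \frac{\alpha}{\beta m}\,\ell_i^t + \sum_{j=1}^{(1-\beta) m} \frac{1-\alpha}{(1-\beta) m}\,\ell_j^n .
\end{equation*}
This is a sum of $m$ random variables $X_1,\dots,X_m$. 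They are independent because the two labeled sets are drawn i.i.d.\ and independently of each other, and $f$ is fixed and does not depend on the data.

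\textbf{Step 1: unbiasedness.} Since $\mathbb{E}[\ell_i^t] = \epsilon_t(f)$ and $\mathbb{E}[\ell_j^n] = \epsilon_n(f)$, linearity gives
\begin{equation*}
\mathbb{E}[\hat{\epsilon}_\alpha(f)] = \alpha \epsilon_t(f) + (1-\alpha)\epsilon_n(f) = \epsilon_\alpha(f).
\end{equation*}

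\textbf{Step 2: ranges.} Each target term lies in $[0, \frac{\alpha}{\beta m}]$ and each noise term lies in $[0, \frac{1-\alpha}{(1-\beta)m}]$. The sum of the squared ranges is therefore
\begin{equation*}
\beta m \cdot \frac{\alpha^2}{\beta^2 m^2} + (1-\beta) m \cdot \frac{(1-\alpha)^2}{(1-\beta)^2 m^2} = \frac{1}{m}\left(\frac{\alpha^2}{\beta} + \frac{(1-\alpha)^2}{1-\beta}\right).
\end{equation*}

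\textbf{Step 3: Hoeffding.} Hoeffding's inequality for independent bounded variables with ranges $[a_k, b_k]$ states $\Pr[|\sum_k X_k - \mathbb{E}\sum_k X_k| \ge \epsilon] \le 2\exp\big(-2\epsilon^2 / \sum_k (b_k-a_k)^2\big)$. Substituting the quantity from Step 2 gives exactly the stated tail bound $2\exp\big(-2m\epsilon^2/(\frac{\alpha^2}{\beta} + \frac{(1-\alpha)^2}{1-\beta})\big)$.

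The phrase ``with probability at least $1-\delta$'' in the statement is redundant for a tail bound. Setting the right-hand side equal to $\delta$ yields the equivalent high-probability form $|\hat{\epsilon}_\alpha(f) - \epsilon_\alpha(f)| \le \gamma\sqrt{\log(2/\delta)/(2m)}$, with $\gamma$ as in \cref{theo:SSNA}. This is the form later combined with a union bound or VC argument to obtain the $\mathcal{O}(\gamma\sqrt{(d\log m + \log(1/\delta))/m})$ term.

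There is no substantive obstacle. The only care needed is bookkeeping: the variables are independent but not identically distributed, so one must use the general version of Hoeffding with heterogeneous ranges rather than the i.i.d.\ version. One should also note the degenerate cases $\beta \in \{0,1\}$, where the corresponding term is simply absent.
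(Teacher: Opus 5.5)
Your proof is correct and is essentially the argument the paper relies on: the paper omits the proof and defers to Lemma 5 of Ben-David et al.\ (2010), which likewise writes $\hat{\epsilon}_\alpha(f)$ as a sum of $m$ independent, non-identically distributed terms with ranges $[0,\frac{\alpha}{\beta m}]$ and $[0,\frac{1-\alpha}{(1-\beta)m}]$, notes unbiasedness, and applies Hoeffding's inequality. Your remark that the ``with probability at least $1-\delta$'' clause is redundant for a tail bound is also accurate.
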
 

For brevity, we omit the proofs of Lemmas~\ref{Lemma1} and \ref{Lemma2} here, which are available in \citep{ben2010theory}. Next, we provide a detailed proof for \cref{theo:SSNA}.

\textit{Proof}. In the proof below, steps labeled L1 and L2 correspond to applications of Lemmas~\ref{Lemma1} and \ref{Lemma2}, respectively, with L2 additionally employing standard techniques of sample symmetrization and VC-dimension–based growth-function bounds \citep{anthony1999neural}.

\begin{align}
\small
\epsilon_t(\hat{f}) 
&\le \epsilon_\alpha(\hat{f}) + (1-\alpha) \left( \frac{1}{2} d_{{\mathcal{H}\Delta \mathcal{H}}}(\widetilde{\mathcal{P}}_n, \widetilde{\mathcal{P}}_t) + \lambda \right) \text{(L1)} \\
&\le \hat{\epsilon}_\alpha(\hat{f}) + 2 \gamma \sqrt{\frac{2d \log(2(m+1)) + 2 \log(\frac{16}{\delta})}{m}} 
 +  (1 - \alpha) \left( \frac{1}{2} d_{\mathcal{H}\Delta \mathcal{H}}(\widetilde{\mathcal{P}}_n, \widetilde{\mathcal{P}}_t) + \lambda \right) \text{(L2)} \\
&\le \hat{\epsilon}_\alpha(f_t^*) + 2 \gamma \sqrt{\frac{2d \log(2(m+1)) + 2 \log(\frac{16}{\delta})}{m}} 
+ (1 - \alpha) \left( \frac{1}{2} d_{\mathcal{H}\Delta \mathcal{H}}(\widetilde{\mathcal{P}}_n, \! \widetilde{\mathcal{P}}_t) + \lambda \right) \label{eq:f^*} \\
&\le \epsilon_\alpha(f_t^*) + 4 \gamma \sqrt{\frac{2d \log(2(m+1)) + 2 \log(\frac{16}{\delta})}{m}} 
 + (1 - \alpha) \left( \frac{1}{2} d_{\mathcal{H}\Delta \mathcal{H}}(\widetilde{\mathcal{P}}_n, \! \widetilde{\mathcal{P}}_t) + \lambda \right) \text{(L2)} \\
&\le \epsilon_t(f_t^*) + 4 \gamma \sqrt{\frac{2d \log(2(m+1)) + 2 \log(\frac{16}{\delta})}{m}} 
+ 2(1 - \alpha) \left( \frac{1}{2} d_{\mathcal{H}\Delta \mathcal{H}}(\widetilde{\mathcal{P}}_n, \widetilde{\mathcal{P}}_t) + \lambda \right) \text{(L1)} \\
&\le \epsilon_t(f_t^*) + 4 \gamma \sqrt{\frac{2d \log(2(m+1)) + 2 \log(\frac{16}{\delta})}{m}}
\nonumber \\
&\quad + 2(1-\alpha) \left(\frac{1}{2} \hat{d}_{\mathcal{H}\Delta\mathcal{H}}(\mathbb{U}_n, \mathbb{U}_t) + 4\sqrt{ \frac{2d \log(2m') + \log\left(\frac{8}{\delta}\right)}{m'}} + \lambda \right) \label{eq:empiricaldivergence} \\
&\le \epsilon_t(f_t^*) + 4 \gamma \sqrt{\frac{2d \log(2(m+1)) + 2 \log(\frac{16}{\delta})}{m}} 
\nonumber \\
&\quad + 2(1-\alpha) \left( \frac{1}{2} \hat{d}_{\mathcal{H}\Delta\mathcal{H}}(\mathbb{U}_n, \mathbb{U}_t) + 4\sqrt{ \frac{2d \log(2m') + \log\left(\frac{8}{\delta}\right)}{m'}} + \epsilon_n (\hat{f}) + \epsilon_t (\hat{f}) \right) \label{eq:lambda_elimitate} \\
&\le \epsilon_t(f_t^*) + 4 \gamma \sqrt{\frac{2d \log(2(m+1)) + 2 \log(\frac{16}{\delta})}{m}} 
\nonumber \\
&\quad \! + \! 2(1-\alpha) \biggl(\frac{1}{2} \hat{d}_{\mathcal{H}\Delta\mathcal{H}}(\mathbb{U}_n, \mathbb{U}_t) + 4\sqrt{ \frac{2d \log(2m') + \log\left(\frac{8}{\delta}\right)}{m'}} \!   
\nonumber \\
&\quad
+ \hat{\epsilon}_n (\hat{f}) + \hat{\epsilon}_t (\hat{f}) + \sqrt{ \frac{8 d \log(\frac{2 e (1 - \beta) m}{d}) + 8 \log(\frac{16}{\delta})}{(1 - \beta) m} } + 
\sqrt{ \frac{8 d \log(\frac{2 e \beta m}{d}) + 8 \log(\frac{16}{\delta})}{\beta m} } \biggr). \label{eq:VC}
\end{align}
Accordingly, we have:
\begin{align}
\epsilon_t(\hat{f}) 
& \leq \epsilon_t (f_t^*) + \mathcal{O} \left( \gamma \sqrt{ \frac{d \log m + \log(\frac{1}{\delta})}{m} } \right) + 2(1 - \alpha) \Bigg[ \frac{1}{2} \hat{d}_{\mathcal{H}\Delta\mathcal{H}}(\mathbb{U}_n, \mathbb{U}_t) + \mathcal{O} \left( \sqrt{\frac{d \log m' + \log(\frac{1}{\delta})}{m'} } \right) \nonumber \\ 
&\quad + \hat{\epsilon}_n (\hat{f}) + \hat{\epsilon}_t (\hat{f}) + \mathcal{O} \left( \sqrt{ \frac{d \log(\frac{(1-\beta)m}{d}) + \log(\frac{1}{\delta})}{(1-\beta)m} } \right) + \mathcal{O} \left( \sqrt{\frac{d \log(\frac{\beta m}{d}) + \log(\frac{1}{\delta})}{\beta m}} \right) \Bigg].
\end{align}
\eqref{eq:f^*} holds due to $\hat{f} = \arg\min_{f \in \mathcal{F}} \hat{\epsilon}_\alpha(f)$, \eqref{eq:empiricaldivergence} is established using the bound proposed in \citep{ben2010theory}, \eqref{eq:lambda_elimitate} holds because $\lambda:= \min_{f \in \mathcal{F}} \epsilon_n (f) + \epsilon_t (f) \leq \epsilon_n (\hat{f}) + \epsilon_t (\hat{f})$, and \eqref{eq:VC} uses the bound from \citep{mohri2018foundations}.

\section{Supplementary Experimental Results}
\label{appendix:SuppExp}

We provide additional results for SOTA + NAF on DTD-47 and Caltech-101 using ResNet-18. As shown in Table~\ref{tab:SOTA_DTD_Caltech}, SOTA + NAF consistently outperforms the standalone SOTA methods across most scenarios, further demonstrating the effectiveness of NAF in leveraging the noise domain to enhance the performance of the target domain.

\begin{table}[htbp]
  \centering
  \caption{Accuracy (\%) comparison on DTD-47 and Caltech-101 using ResNet-18. Here, $\bm{\Delta}$ indicates the performance gain introduced by NAF.}
  \setlength{\tabcolsep}{{0.15cm}}
  \resizebox{\textwidth}{!}{
    \begin{tabular}{ccccccccccc}
    \toprule
    Datasets & \multicolumn{5}{c}{DTD-47}     &     \multicolumn{5}{c}{Caltech-101} \\
    \cmidrule(lr){1-1} \cmidrule(lr){2-6} \cmidrule(lr){7-11}
    Epoch & 5     & 10    & 15    & 20    & Average & 5     & 10    & 15    & 20    & Average \\
    \midrule
    UDA \citep{xie2020unsupervised}  & 46.28 & 46.81 & 46.90 & 47.32 & 46.83 & 79.20 & 79.61 & 80.00 & 80.28 & 79.77 \\
    \rowcolor[rgb]{ .949,  .949,  .949} UDA + NAF & 46.88 & 47.89 & 49.10 & 49.22 & 48.27 & 80.98 & 81.40 & 81.21 & 81.43 & 81.26 \\
    \rowcolor[rgb]{ .949,  .949,  .949} $\bm{\Delta}$ & \textcolor{blackishgreen}{\textbf{+0.60}} & \textcolor{blackishgreen}{\textbf{+1.08}} & \textcolor{blackishgreen}{\textbf{+2.20}} & \textcolor{blackishgreen}{\textbf{+1.90}} & \textcolor{blackishgreen}{\textbf{+1.44}} & \textcolor{blackishgreen}{\textbf{+1.78}} & \textcolor{blackishgreen}{\textbf{+1.79}} & \textcolor{blackishgreen}{\textbf{+1.21}} & \textcolor{blackishgreen}{\textbf{+1.15}} & \textcolor{blackishgreen}{\textbf{+1.49}} \\
    \midrule    
    FixMatch \citep{sohn2020fixmatch} & 46.51 & 47.78 & 48.09 & 48.23 & 47.65 & 80.13 & 80.27 & 80.28 & 79.99 & 80.17 \\
    \rowcolor[rgb]{ .949,  .949,  .949} FixMatch + NAF & 48.85 & 49.57 & 50.12 & 49.86 & 49.60 & 80.96 & 80.96 & 80.42 & 80.42 & 80.69 \\
    \rowcolor[rgb]{ .949,  .949,  .949} $\bm{\Delta}$ & \textcolor{blackishgreen}{\textbf{+2.34}} & \textcolor{blackishgreen}{\textbf{+1.79}} & \textcolor{blackishgreen}{\textbf{+2.03}} & \textcolor{blackishgreen}{\textbf{+1.63}} & \textcolor{blackishgreen}{\textbf{+1.95}} & \textcolor{blackishgreen}{\textbf{+0.83}} & \textcolor{blackishgreen}{\textbf{+0.69}} & \textcolor{blackishgreen}{\textbf{+0.14}} & \textcolor{blackishgreen}{\textbf{+0.43}} & \textcolor{blackishgreen}{\textbf{+0.52}} \\
    \midrule
    FlexMatch \citep{zhang2021flexmatch} & 50.66 & 51.29 & 50.94 & 50.69 & 50.90 & 82.74 & 83.83 & 83.61 & 83.70 & 83.47 \\
    \rowcolor[rgb]{ .949,  .949,  .949} FlexMatch + NAF & 50.51 & 50.87 & 51.03 & 51.35 & 50.94 & 83.22 & 84.08 & 83.74 & 83.77 & 83.70 \\
    \rowcolor[rgb]{ .949,  .949,  .949} $\bm{\Delta}$ & \textcolor{blackishred}{\textbf{-0.15}} & \textcolor{blackishred}{\textbf{-0.42}} & \textcolor{blackishgreen}{\textbf{+0.09}} & \textcolor{blackishgreen}{\textbf{+0.66}} & \textcolor{blackishgreen}{\textbf{+0.04}} & \textcolor{blackishgreen}{\textbf{+0.48}} & \textcolor{blackishgreen}{\textbf{+0.25}} & \textcolor{blackishgreen}{\textbf{+0.13}} & \textcolor{blackishgreen}{\textbf{+0.07}} & \textcolor{blackishgreen}{\textbf{+0.23}} \\
    \midrule   
    DebiasMatch \citep{wang2022debiased} & 45.67 & 45.99 & 45.46 & 46.42 & 45.89 & 80.87 & 81.09 & 81.29 & 81.60 & 81.21 \\
    \rowcolor[rgb]{ .949,  .949,  .949} DebiasMatch + NAF & 49.01 & 49.79 & 50.02  & 50.09 & 49.73 & 82.46 & 82.62 & 82.77 & 82.60 & 82.61 \\
    \rowcolor[rgb]{ .949,  .949,  .949} $\bm{\Delta}$ & \textcolor{blackishgreen}{\textbf{+3.34}} & \textcolor{blackishgreen}{\textbf{+3.80}} & \textcolor{blackishgreen}{\textbf{+4.56}} & \textcolor{blackishgreen}{\textbf{+3.67}} & \textcolor{blackishgreen}{\textbf{+3.84}} & \textcolor{blackishgreen}{\textbf{+1.59}} & \textcolor{blackishgreen}{\textbf{+1.53}} & \textcolor{blackishgreen}{\textbf{+1.48}} & \textcolor{blackishgreen}{\textbf{+1.00}} & \textcolor{blackishgreen}{\textbf{+1.40}} \\
    \midrule
    DST \citep{chen2022debiased}  & 49.84 & 51.68 & 52.27 & 51.93 & 51.43 & 80.75 & 81.85 & 82.19 & 82.16 & 81.74 \\
    \rowcolor[rgb]{ .949,  .949,  .949} DST + NAF & 51.08 & 52.00 & 52.54 & 52.55 & 52.04 & 81.70 & 82.72 & 82.85 & 82.87 & 82.54 \\
    \rowcolor[rgb]{ .949,  .949,  .949} $\bm{\Delta}$ & \textcolor{blackishgreen}{\textbf{+1.24}} & \textcolor{blackishgreen}{\textbf{+0.32}} & \textcolor{blackishgreen}{\textbf{+0.27}} & \textcolor{blackishgreen}{\textbf{+0.62}} & \textcolor{blackishgreen}{\textbf{+0.61}} & \textcolor{blackishgreen}{\textbf{+0.95}} & \textcolor{blackishgreen}{\textbf{+0.87}} & \textcolor{blackishgreen}{\textbf{+0.66}} & \textcolor{blackishgreen}{\textbf{+0.71}} & \textcolor{blackishgreen}{\textbf{+0.80}} \\
    \midrule
    LERM \citep{zhang2024rethinking} & 47.20 & 47.50 & 48.03 & 48.42 & 47.79 & 82.36 & 83.06 & 82.98 & 83.13 & 82.88 \\
    \rowcolor[rgb]{ .949,  .949,  .949} LERM + NAF & 48.85 & 48.83 & 48.87 & 48.92 & 48.87 & 83.14 & 83.59 & 83.23 & 83.06 & 83.26 \\
    \rowcolor[rgb]{ .949,  .949,  .949} $\bm{\Delta}$ & \textcolor{blackishgreen}{\textbf{+1.65}} & \textcolor{blackishgreen}{\textbf{+1.33}} & \textcolor{blackishgreen}{\textbf{+0.84}} & \textcolor{blackishgreen}{\textbf{+0.50}} & \textcolor{blackishgreen}{\textbf{+1.08}} & \textcolor{blackishgreen}{\textbf{+0.78}} & \textcolor{blackishgreen}{\textbf{+0.53}} & \textcolor{blackishgreen}{\textbf{+0.25}} & \textcolor{blackishred}{\textbf{-0.07}} & \textcolor{blackishgreen}{\textbf{+0.38}} \\
    \midrule
    SA-FixMatch \citep{li2025towards} & 45.28 & 45.73 & 46.45 & 46.08 & 45.89 & 79.02 & 79.24 & 79.10 & 79.37 & 79.18 \\
    \rowcolor[rgb]{ .949,  .949,  .949} SA-FixMatch + NAF & 48.30 & 48.87 & 48.72 & 48.88 & 48.69 & 80.84 & 81.03 & 81.04 & 81.83 & 81.19 \\
    \rowcolor[rgb]{ .949,  .949,  .949} $\bm{\Delta}$ & \textcolor{blackishgreen}{\textbf{+3.02}} & \textcolor{blackishgreen}{\textbf{+3.14}} & \textcolor{blackishgreen}{\textbf{+2.27}} & \textcolor{blackishgreen}{\textbf{+2.80}} & \textcolor{blackishgreen}{\textbf{+2.80}} & \textcolor{blackishgreen}{\textbf{+1.82}} & \textcolor{blackishgreen}{\textbf{+1.79}} & \textcolor{blackishgreen}{\textbf{+1.94}} & \textcolor{blackishgreen}{\textbf{+2.46}} & \textcolor{blackishgreen}{\textbf{+2.01}} \\
    \bottomrule
    \end{tabular}%
    }
  \label{tab:SOTA_DTD_Caltech}%
\end{table}%

\section{Additional Analysis Experiments}
\label{appendix:Additional_Exp}

\textbf{Q15. How does the amount of noise impact NAF?} 
We vary the amount of noise per class (\textit{i.e.}, 0, 10, 50, 100, 200) to evaluate its impact on NAF. The results on CIFAR-100 using ResNet-18 are shown in \cref{tab:different_noise_amount}. As can be observed, when the amount of noise is zero, NAF degenerates to ERM, resulting in poor performance. 
As the noise increases from 10 to 100, performance remains relatively stable, indicating that the presence of a class-discriminative structure in the noise domain is more important than the total amount of noise. Even a small amount of noise can form separable patterns in the domain-shared representation space and guide the alignment of target representations. When the noise per class reaches 200, performance slightly declines, suggesting that excessive noise may increase learning difficulty and provide limited additional benefit.

\begin{table}[htbp]
    \centering 
     \setlength{\tabcolsep}{0.4cm}
    \captionof{table}{Accuracy (\%) comparison on CIFAR-100 using ResNet-18 with varying amounts of noise.}
    \resizebox{0.6\columnwidth}{!}{
    \begin{tabular}{cccccc}
    \toprule
    Noise Amount  & 0 & 10 & 50 & 100 & 200 \\
    \midrule
    NAF & 42.24 & 50.09 & 49.98 & 49.77 & 48.92 \\
    \bottomrule
    \end{tabular}%
    }
    \label{tab:different_noise_amount}%
\end{table}

\textbf{Q16. How do the hyperparameters $\alpha$ and $\beta$ influence NAF?} 
We analyze the sensitivity of $\alpha$ and $\beta$ on CIFAR-100 and CIFAR-10 using ResNet-18. Figure~\ref{fig:hyperparameter} presents the performance of NAF under varying values of $\alpha$ and $\beta$, respectively. On CIFAR-100, NAF achieves relatively stable performance within a moderate range (\textit{e.g.}, $\alpha, \beta \in [1, 20]$), indicating limited sensitivity to hyperparameter variations in this region. On CIFAR-10, similarly stable performance is observed within a moderate range (\textit{e.g.}, $\alpha, \beta \in [0.1, 1]$). Those observations suggest that although the stable regions vary across datasets, NAF is generally not overly sensitive to hyperparameter selection within appropriate ranges.

\begin{figure*}[htbp]
\centering
\subfloat[\label{fig:alpha}]{\includegraphics[width=0.24\textwidth]{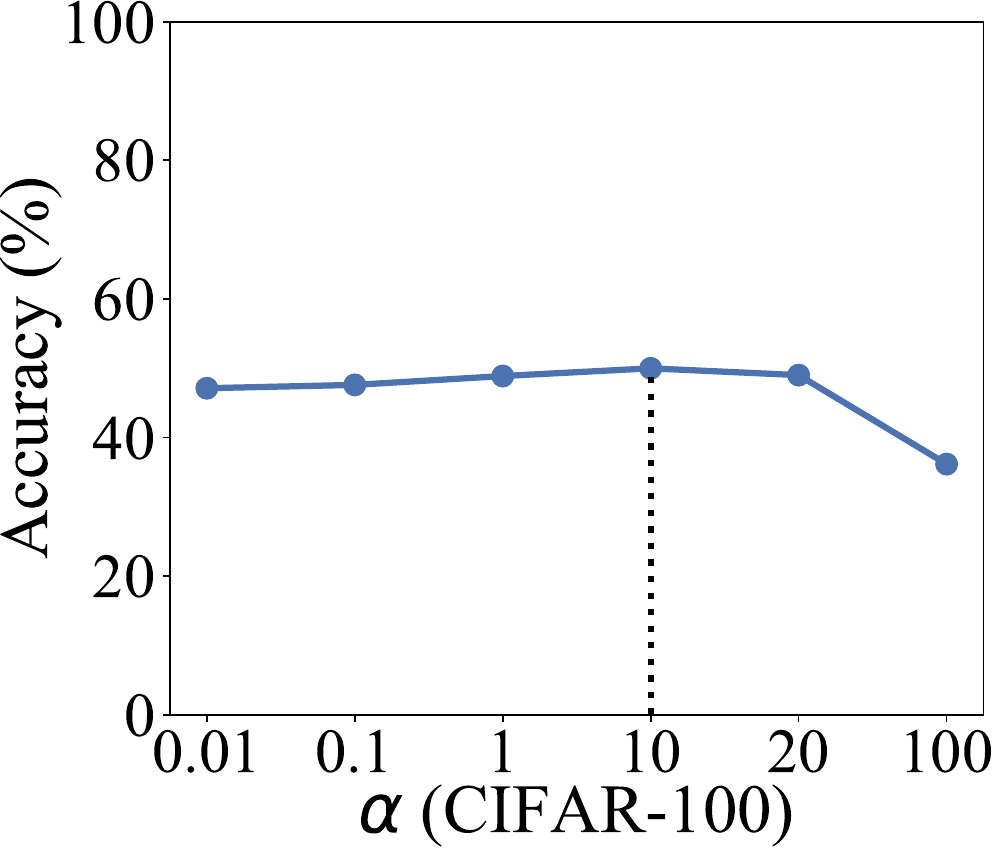}}
\subfloat[\label{fig:beta}]{\includegraphics[width=0.24\textwidth]{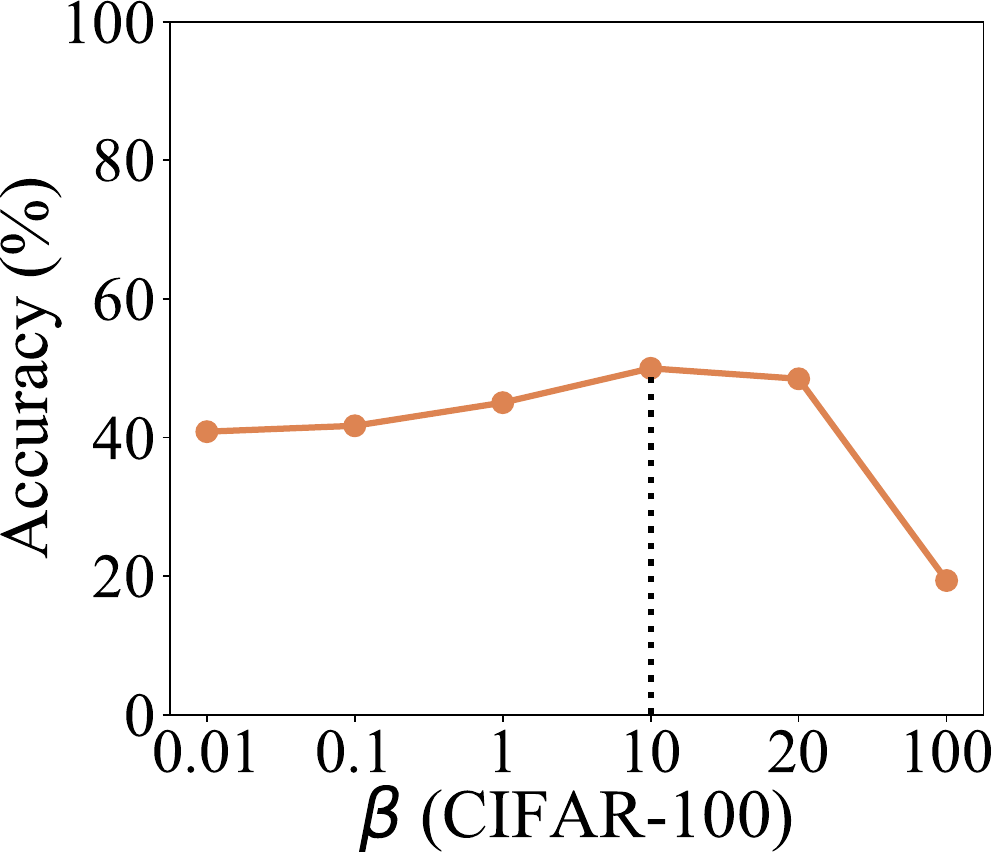}}
\subfloat[\label{fig:alpha}]{\includegraphics[width=0.24\textwidth]{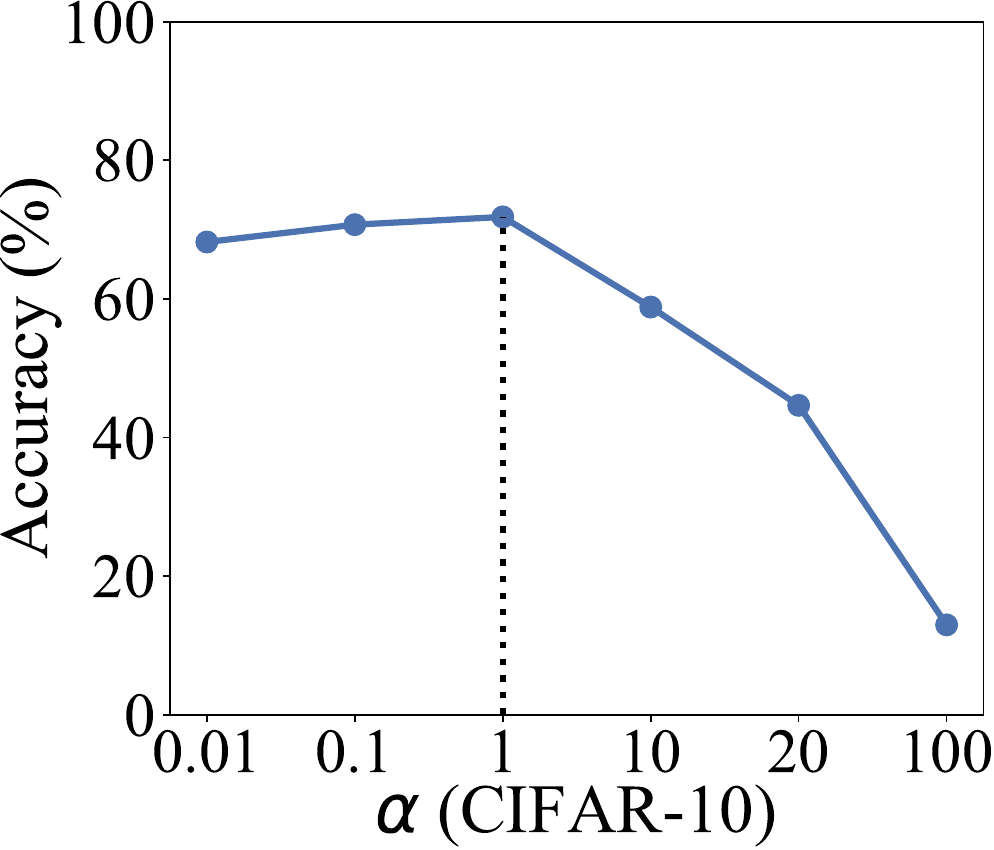}}
\subfloat[\label{fig:beta}]{\includegraphics[width=0.24\textwidth]{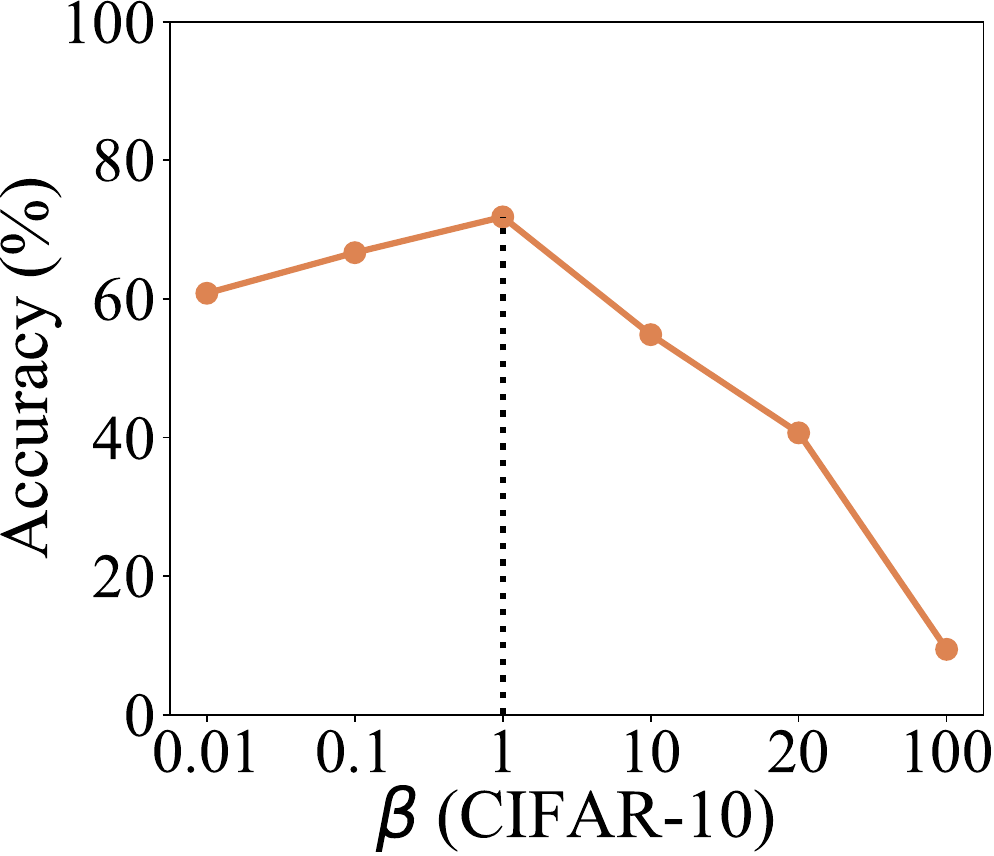}}
\caption{Sensitivity analysis of $\alpha$ and $\beta$ on CIFAR‑100 and CIFAR‑10 using ResNet‑18.}
\label{fig:hyperparameter}
\end{figure*}

\textbf{Q17. How does NAF perform under varying inter-class distances in the noise domain?} We perform ablation studies by constructing noise domains with controlled inter-class distances. Specifically, we first sample a global mean $\boldsymbol{\mu}$ and class-specific offsets $\boldsymbol{\epsilon}_c$ from a standard Gaussian distribution, and define class means as $\boldsymbol{\mu}_c = \boldsymbol{\mu} + \delta \boldsymbol{\epsilon}_c$, where $\delta$ explicitly controls the distance between class means. Then, we sample 50 noise per class from the Gaussian distribution $\mathcal{N}(\boldsymbol{\mu}_c, \mathbf{I})$. By varying $\delta$ over the set $\{0, 0.1, 0.3, 0.5, 1\}$, we adjust the inter-class distances of the noise domain, corresponding to Jensen–Shannon (JS) divergence values of 0, 2.57, 23.16, 64.33, and 257.33, respectively. Higher JS divergence values indicate larger inter-class distances. The results on CIFAR-100 using ResNet-18 are reported in \cref{tab:distance}. We can see that when $\delta = 0$, NAF performs comparably to ERM. As the value of $\delta$ increases, the performance improves accordingly, indicating that larger inter-class distances in the noise domain lead to enhanced generalization performance.

\begin{table}[htbp]
\centering
\captionof{table}{Accuracy (\%) comparison on CIFAR-100 using ResNet-18 with varying inter-class distances of the noise domain.}
\begin{tabular}{cccccc}
\toprule
$\delta$ & 0 & 0.1 & 0.3 & 0.5 & 1 \\
\midrule
ERM & 42.24 & 42.24 & 42.24 & 42.24 & 42.24 \\
\rowcolor[rgb]{ .949,  .949,  .949} NAF & 43.80 & 43.78 & 46.57 & 49.57 & 49.78 \\
\bottomrule
\end{tabular}
\label{tab:distance}%
\end{table}

\textbf{Q18. How does NAF compare with plug-in modules for SSL?} LERM \citep{zhang2024rethinking} is an effective plug-in module for SSL. We compare NAF and LERM under both ERM and DST on CIFAR-10 using ResNet-18. As shown in \cref{tab:SSL-plug-in}, NAF provides a larger improvement than LERM when combined with ERM. On DST, both methods yield modest improvements, with NAF showing comparable performance to LERM. Those results suggest that NAF may serve as a competitive plug-in module for SSL.

 \begin{table}[htbp]
\centering
\caption{Accuracy (\%) comparison of ERM and DST combined with either LERM or NAF on CIFAR-10 using ResNet-18.}
\begin{tabular}{cccc}
\toprule
Method & Base & + LERM & + NAF \\
\midrule
ERM & 58.15 & 64.90 & 71.83 \\
DST & 84.96 & 86.82 & 86.58 \\
\bottomrule
\end{tabular}
\label{tab:SSL-plug-in}
\end{table}


\textbf{Q19. How does NAF compare with contrastive learning methods?} 
We compare NAF with two contrastive learning baselines: a CLIP‑inspired method (denoted as CL) \citep{radford2021learning} and SupCon \citep{khosla2020supervised}. CL applies a contrastive loss between weakly-augmented and strongly-augmented unlabeled target samples to encourage consistent representations. SupCon, in contrast, uses labeled samples to enforce class-wise constraints, bringing samples of the same class closer while separating different classes. On CIFAR-100 with ResNet-18, ERM + CL achieves 44.15\% accuracy, and ERM + SupCon achieves 44.02\%, both improving over ERM alone (42.24\%) but still falling short of ERM + NAF (49.98\%). A possible explanation is that CL uses only the unlabeled target samples for contrastive learning without leveraging their pseudo-labels, while SupCon relies on limited labeled samples to construct class-wise constraints, which may be less effective in semi-supervised TL settings.

\section{Limitations}
\label{appendix_sec:limitations}

One practical limitation of NAF is the heuristic selection of the hyperparameters $\alpha$ and $\beta$. While the method is relatively insensitive to those values over a moderate range, some manual tuning may still be needed in practice. Developing adaptive or learnable strategies for those parameters is an important direction for future work. In addition, our current evaluation focuses on standard benchmarks; extending NAF to more realistic application scenarios (\textit{e.g.}, recommendation systems) is another promising direction.


\end{document}